\documentclass[twoside]{article}
\usepackage{lmodern}

\usepackage[preprint]{aistats2027}
  \makeatletter\let\ps@empty\ps@plain \makeatother

\makeatletter\@secpenalty=0 \makeatother

\usepackage{fontawesome5}
\usepackage{hyperref}
\usepackage{amsmath,amssymb,amsthm}
\usepackage{booktabs}
\usepackage{graphicx}
\usepackage{subcaption}
\usepackage{algorithm}
\usepackage{algpseudocode}
\usepackage[font=small,labelfont=bf,skip=4pt]{caption}
\usepackage{xcolor}
\usepackage{hyperref}
\usepackage{url}
\usepackage{microtype}
\usepackage[authoryear,round]{natbib}

\graphicspath{{../code/figures/out/}{../code/}}

\hypersetup{colorlinks=true,linkcolor=blue!50!black,citecolor=blue!50!black,urlcolor=magenta}

\theoremstyle{plain}
\newtheorem{theorem}{Theorem}
\newtheorem{proposition}[theorem]{Proposition}

\newtheorem{corollary}[theorem]{Corollary}
\theoremstyle{definition}
\newtheorem{definition}[theorem]{Definition}
\newtheorem{assumption}[theorem]{Assumption}

\theoremstyle{remark}
\newtheorem{remark}[theorem]{Remark}

\newcommand{\Prob}{\mathbb{P}}
\newcommand{\Hist}{\mathcal{H}}

\newcommand{\eqpi}{\sim_{\pi}}

\begin{document}

\runningtitle{Exact Distinguishability in Non-Markovian Decision Processes}

\twocolumn[

\aistatstitle{Exact Distinguishability in Non-Markovian Decision Processes}

\vspace{-12pt}
\aistatsauthor{Kabir Murjani \And Nisarg Patel }

\aistatsaddress{Nirma University \\ Ahmedabad, India \\ \texttt{kabirmurjani@gmail.com} \And
  Google \\ Mountain View, CA \\ \texttt{nisargpatel@google.com} }
 ]

\begin{abstract}
Non-Markovian environments are often modeled as Regular Decision Processes (RDPs), where dynamics depend on the interaction history through a finite automaton. Existing offline guarantees for RDPs rely on a distinguishability assumption on the behaviour policy but provide no means of verifying it. When the assumption is violated, distinct models may explain the data equally well. We study when data collected under a fixed behaviour policy can distinguish two candidate RDPs. We prove that the posterior odds between observationally equivalent candidates remain equal to the prior odds at every sample size, even when the policy visits every automaton state, and verify both results formally in Lean 4. We then characterize this equivalence exactly and derive PEC, an algorithm that decides it in time linear in the size of the product automaton. The distinguishability assumption of prior work fails on three of our four test environments, and the experiment identified by PEC restores it in each case.\footnote{\faGithub\ \href{https://github.com/Kcbir/pec}{Source code and Lean 4 formalisations.}}
\end{abstract}

\section{Introduction}
\label{sec:intro}

To model non-Markovian environments, recent work frequently utilizes the Regular Decision Process (RDP) \citep{brafman2019rdp}, a framework where the next observation depends on the entire interaction history through the state of a finite automaton driven by the action-observation sequence.  The model class sits between Markov decision processes (MDPs) and partially observable Markov decision processes (POMDPs), and a recent line of work learns it offline, from a corpus of interaction data produced under a fixed operating policy.

A corpus of that
kind is consistent with several environments at once. A point estimator fitted to it returns
one of them with no indication that the others fit equally well, and a Bayesian learner's odds
between them remain at the prior, which from inside a single run cannot be told apart from slow
learning. Existing guarantees for this model class presuppose that the corpus is informative.
\citet{regorl} gave the first provably
efficient offline algorithm for RDPs, RegORL, with probably approximately correct (PAC)
guarantees under a distinguishability assumption on the behaviour policy, and
\citet{iclr2025rdp} replace that parameter, which decays exponentially in the number of
automaton states, with a formal-language metric. Neither gives a way to check whether a given
policy satisfies the assumption. Consider a freight planner who reroutes around a lane for
one period after each late delivery on it. The records contain the lane and contain late
deliveries, yet a model in which lateness persists into the next period and one in which it
does not agree on every shipment the rule sends. No quantity of such records therefore tests
the belief that motivated the rule, although a single shipment sent on the lane just after a
late delivery would test it. Given two candidate models and a behaviour policy, we investigate whether any quantity of data collected under that policy can distinguish between them, and if not, we identify which experiment would separate them.

The question is posed for a stated pair. Such pairs arise as competing mechanistic accounts
already written down in a domain, as a hypothesis tested against its memoryless null, or as
the top modes of a Bayesian structure learner compared pairwise. Deciding identifiability over
an implicitly specified hypothesis class is a different problem and remains open
(\S\ref{sec:limits}).

Whether data collected under the behaviour policy can separate two candidates is determined by
the candidates and the policy alone. We decide it exactly, before any learner is
fitted, and the answer applies to a corpus of any size. The relevant notion is $\pi$-observational equivalence: two candidates that
assign equal likelihood to every history the behaviour policy can generate leave the posterior
odds between them at the prior odds at every sample size (Theorem~\ref{thm:frozen}). The
condition differs from state coverage. On a two-action T-maze
the behaviour policy visits every state while the posterior never moves
(Proposition~\ref{prop:collapse}), and both results are checked in Lean~4 against
Mathlib (\S\ref{sec:frozen}).

Characterizing $\pi$-equivalence exactly depends on choosing the right object
(\S\ref{sec:measurezero}). The set of histories on which two likelihoods differ need not be
regular, since two products of real numbers may coincide arithmetically, so it admits no
decision procedure of the usual kind. Posing the question instead over the disagreeing
state--action pairs of the product automaton yields a characterization that holds for an
arbitrary behaviour policy, at every finite horizon and at infinite horizon
(Theorem~\ref{thm:measurezero}). When the action support of the policy does not vary with
history, that characterization reduces to reachability in the product automaton, and the
resulting procedure, \textsc{Pec} (the $\pi$-equivalence certifier), decides it in time
$O(|Q|\,|Q'|\,|A|\,|O|)$, independently of the horizon (\S\ref{sec:decide}). When the
candidates are not equivalent, \textsc{Pec} returns the state pair and the action at which the
corpus must be extended, which is the missing experiment itself. It agrees with brute-force
enumeration of histories on all 3700 model pairs we tested, and its running time grows
linearly in the size of the product automaton up to $10^6$ state pairs.

The same search answers the question that prior offline guarantees leave open. The
distinguishability assumption on which the guarantee of \citet{regorl} rests takes time
exponential in the horizon to evaluate from its definition, whereas \textsc{Pec} decides it
without reference to the horizon (\S\ref{sec:regorl}). Computed in exact arithmetic, the
distinguishability parameter is zero on three of our four test environments, each of which
\textsc{Pec} certifies as equivalent, and in every case the experiment named by the witness
restores the assumption.

\section{Related Work}
\label{sec:related}

\paragraph{Non-Markovian formalisms.} Reward machines place history dependence in the reward
alone and leave the dynamics Markov \citep{toroicarte2022rm,pushdownrm}. Regular Decision
Processes \citep{brafman2019rdp} make the dynamics themselves history dependent, and
\citet{abadi2020} gave the first method for learning and solving them. Subsequent work covers
offline reinforcement learning with PAC guarantees
\citep{regorl,iclr2025rdp,tractableoffline}, Markov abstractions
\citep{markovabstractions}, online learning of non-Markovian reward models
\citep{onlinenmrm}, task automata \citep{taskautomata}, and automata cascades
\citep{cascades}. 

\paragraph{Offline guarantees.} \citet{regorl} established
the first provably efficient offline reinforcement learning framework for this model class,
with a sample complexity parameterized by a distinguishability quantity $\mu_0$ that the
corpus is assumed to bound away from zero. That assumption is the natural counterpart of our
question, and we compute their parameter exactly (\S\ref{sec:regorl}). The
two methods return different objects, since RegORL returns a policy with a bound conditional
on identifiability while \textsc{Pec} returns identifiability itself.
Appendix~\ref{sec:rel-offline} discusses that comparison in full.

\paragraph{Automata learning and non-identifiability.} $L^*_{\mathrm{MDP}}$
\citep{tappler2021lstar} learns MDP structure from black-box interaction and already designs
schedulers to disambiguate, and practical frameworks answer equivalence queries by
statistical conformance testing \citep{aalpy}. The barrier in our setting is the absence of a
system to query, since conformance testing still has to run new traces while we hold a
fixed corpus, and the setting of those methods is Markov given the state
\citep{advice2025,aalagenda2026}. Priors over automata exist
\citep{pfau2010pdia,collapsedpfa,deepdfa} but model sequences without actions or a behaviour
policy, and Bayesian treatments of non-Markovian reward keep the history dependence in the
reward \citep{birlnmr,inferprm,pigls}. Freezing under an uninformative likelihood is known in
off-policy evaluation and causal inference
\citep{ope2022causal,worldmodelnonident,ivconfounded}. In a Markov MDP, non-identifiability under a behaviour policy is a state--action
coverage condition on the true model. Expected information gain \citep{bald} has not been combined with priors over
automaton structure, and the theory of contaminated priors and misspecified posteriors that we
rely on is classical
\citep{berk1966,shalizi2009}.

\section{Preliminaries}
\label{sec:defs}
This framework requires no independent reward function, since a reward taking finitely many values is naturally subsumed into $O$ by $O = O_{env} \times R$, with $e(q,a)$ acting as the joint emission over the pair. 

\subsection{Regular decision processes}

Fix finite sets $A$ of actions and $O$ of observations. A history is a finite sequence
$h = (a_1,o_1)\cdots(a_T,o_T) \in (A \times O)^*$, and we write $\Hist = (A\times O)^*$ and
$\Hist_T$ for histories of length $T$. For $1 \le t \le |h|+1$ we write
$h_{<t} = (a_1,o_1)\cdots(a_{t-1},o_{t-1})$ for the prefix of length $t-1$, with
$h_{<1} = \lambda$ the empty history.

\begin{definition}[RDP]
\label{def:rdp}
An RDP is a tuple $M = (Q, q_0, \delta, e)$ where $Q$ is a finite set of automaton states,
$q_0 \in Q$ is initial, $\delta : Q \times A \times O \to Q$ is the transition function, and
$e : Q \times A \to \Delta(O)$ assigns to each state--action pair a distribution over
observations, so that $\sum_{o \in O} e(q,a)(o) = 1$ for every $(q,a)$.
\end{definition}

The automaton state is a deterministic function of history. We extend $\delta$ to
$\hat\delta : \Hist \to Q$ by $\hat\delta(\lambda) = q_0$ and
$\hat\delta(h\,(a,o)) = \delta(\hat\delta(h), a, o)$. The set $Q$ is not observable, and the
agent sees only the history $h$. Definition~\ref{def:rdp} carries no separate reward
function, since a reward taking finitely many values is subsumed in $O$ by setting
$O = O_{\mathrm{env}} \times R$ and reading $e(q,a)$ as the joint emission over the observed
pair. Appendix~\ref{sec:app-scope} gives that correspondence in full.

\begin{definition}[History likelihood]
\label{def:histlik}
$\displaystyle L_M(h) \;=\; \prod_{t=1}^{|h|} e\bigl(\hat\delta(h_{<t}), a_t\bigr)(o_t)$,
the probability that $M$ emits the observations in $h$ given the actions in $h$.
\end{definition}

The likelihood $L_M$ conditions on the actions and excludes the policy's own probabilities.
This factorization carries the develorequires a logarithm, so expected information gain alonepment that follows, and it rests on an ordering
assumption that \S\ref{sec:measurezero} depends upon, namely that the policy chooses $a_t$
before observing $o_t$.

\begin{definition}[Policy and induced path measure]
A behaviour policy is $\pi : \Hist \to \Delta(A)$. Its history probability is
$\Pi(h) = \prod_{t=1}^{|h|} \pi(h_{<t})(a_t)$, and the induced measure on histories is
$\Prob^\pi_M(h) = \Pi(h)\, L_M(h)$.
\end{definition}

\subsection{Two notions of observational equivalence}

\begin{definition}[Global observational equivalence]
\label{def:globaleq}
$M \sim M'$ iff $L_M(h) = L_{M'}(h)$ for every $h \in \Hist$.
\end{definition}

\begin{definition}[$\pi$-observational equivalence]
\label{def:pieq}
$M \eqpi M'$ iff $L_M(h) = L_{M'}(h)$ for every $h$ with $\Pi(h) > 0$, that is, for every history the behaviour policy can generate.
\end{definition}

Both relations are equivalence relations, $M \sim M'$ implies $M \eqpi M'$ for every $\pi$,
and either relation implies $\Prob^\pi_M = \Prob^\pi_{M'}$. We formalise these facts in Lean
together with the reflexivity, symmetry, and transitivity lemmas that each relation requires.
Throughout, the unqualified term \emph{observational equivalence} denotes the
policy-relative notion of Definition~\ref{def:pieq}, and the relation of
Definition~\ref{def:globaleq} is always marked \emph{global}. Global equivalence is strictly
stronger, and \S\ref{sec:collapse} exhibits a pair that is $\pi$-equivalent while globally
inequivalent.

\section{Policy-Relative Identifiability}
\label{sec:frozen}

Let a prior assign weights $w, w' > 0$ to $M, M'$ respectively. After observing history $h$,
the unnormalized posterior weights are $w\,L_M(h)$ and $w'\,L_{M'}(h)$.

\begin{theorem}[Frozen posterior odds]
\label{thm:frozen}
Let $M \eqpi M'$ and let $h$ be any history with $\Pi(h) > 0$ and $L_{M'}(h) \neq 0$. Then
\[
\frac{w\,L_M(h)}{w'\,L_{M'}(h)} \;=\; \frac{w}{w'}.
\]
Consequently the posterior odds between $M$ and $M'$ equal the prior odds at every sample
size, and no quantity of data collected under $\pi$ moves them.
\end{theorem}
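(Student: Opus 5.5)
The argument is essentially a direct substitution, so I will keep it short and make the interpretation of ``every sample size'' explicit. Fix $h$ with $\Pi(h)>0$ and $L_{M'}(h)\neq 0$. By Definition~\ref{def:pieq}, $M\eqpi M'$ gives $L_M(h)=L_{M'}(h)$. Since $L_{M'}(h)\neq 0$ and $w'>0$, the denominator $w'L_{M'}(h)$ is nonzero. Substituting and cancelling the common nonzero factor $L_{M'}(h)$ gives
\[
\frac{w\,L_M(h)}{w'\,L_{M'}(h)} = \frac{w\,L_{M'}(h)}{w'\,L_{M'}(h)} = \frac{w}{w'}.
\]
The normalizing constant of the posterior (which may include other candidates) is common to both weights, so it cancels in the ratio. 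Hence the posterior odds are exactly this ratio.

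For the ``consequently'' clause, I treat a corpus of $n$ trajectories $h^{(1)},\dots,h^{(n)}$, each generated under $\pi$, as the data at sample size $n$. Each trajectory then has $\Pi(h^{(i)})>0$ almost surely. The joint likelihood under $M$ factorizes as $\prod_i L_M(h^{(i)})$, and the policy factors $\prod_i \Pi(h^{(i)})$ are common to both models and cancel. Applying the equality $L_M(h^{(i)})=L_{M'}(h^{(i)})$ factor by factor, the products agree. The same cancellation then yields odds $w/w'$ whenever the product is nonzero.

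A single long trajectory is also covered: each prefix $h_{<t}$ satisfies $\Pi(h_{<t})>0$ whenever $\Pi(h)>0$, because $\Pi$ is a product of the same per-step factors. So the odds are frozen at every intermediate time as well.

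The only real subtlety is the case $L_{M'}(h)=0$, which the hypothesis excludes. There the odds are undefined rather than moved: by $\pi$-equivalence $L_M(h)=0$ too, so both models are eliminated simultaneously. I would also note that if $h$ is drawn from $\Prob^\pi_{M'}$, this event has probability zero. Hence the conclusion holds almost surely at every sample size.

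I expect no step to be a genuine obstacle. The care is in stating that the policy terms cancel, so that $L_M$ rather than $\Prob^\pi_M$ is the relevant likelihood, and in the i.i.d.\ factorization for multiple episodes.
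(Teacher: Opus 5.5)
Your proposal is correct and follows the paper's own argument: $\pi$-equivalence gives $L_M(h)=L_{M'}(h)$, and the common nonzero factor cancels. The rest of what you wrote is sound elaboration that the paper's one-line proof leaves implicit: the cancellation of the policy factors and of the normalizer, the multi-episode product, the positivity of prefixes, and the remark that the excluded case $L_{M'}(h)=0$ has probability zero under either model.
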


\begin{proof}
$\pi$-equivalence gives $L_M(h) = L_{M'}(h)$, and the common nonzero factor cancels.
\end{proof}

Should the favoured candidate be $\pi$-equivalent to the truth, no data gathered under that behaviour
policy can correct the fitted model. Correcting it becomes a problem of experimental design, which \S\ref{sec:eig} takes up. The quantity our tables report is the complement of $\pi$-equivalence, namely the probability that the policy places on histories separating the pair.

\begin{definition}[Separating mass]
\label{def:sepmass}
Write $D_T = \{\, h \in \Hist_T : L_M(h) \neq L_{M'}(h) \,\}$ for the histories of length $T$
that the two models score differently. The \emph{separating mass} of $\pi$ at horizon $T$ is
the prior-predictive probability of that set, the prior weights being normalized so that
$w + w' = 1$:
\[
m_T(\pi) \;=\; \sum_{h \in D_T} \Pi(h)\,\bigl(w\,L_M(h) + w'\,L_{M'}(h)\bigr).
\]
\end{definition}

Every entry labelled a mass is $m_T(\pi)$, computed in exact rational arithmetic at
$w = w' = 1/2$. It vanishes at every horizon exactly when $M \eqpi M'$, since no history in
$D_T$ has both likelihoods zero, and hence exactly when the criterion of
Theorem~\ref{thm:measurezero}.

\subsection{A two-action T-maze}
\label{sec:collapse}

A natural guess is that $\pi$-observational equivalence reduces to state coverage, so that a
policy visiting every automaton state of the true model identifies it. The following
construction refutes that guess. It needs two actions, and no one-action variant can exhibit
the phenomenon (Remark~\ref{rem:twoactions}).

\begin{itemize}
\item $\mathrm{wait}$ is uninformative, and both models emit a fair coin.
\item $\mathrm{run}$ runs the corridor through to its far end, whereupon the cue model
      reveals the remembered cue deterministically while the blind model emits a fair coin.
\end{itemize}

The cue model $M_{\mathrm{cue}}$ has three automaton states and remembers the first
observation, whereas the blind model $M_{\mathrm{blind}}$ has a single state $q_{\mathrm{b}}$
and ignores history. The behaviour policy waits at every step.

\begin{proposition}[Collapse test]
\label{prop:collapse}
Under the always-$\mathrm{wait}$ policy, $M_{\mathrm{cue}} \eqpi M_{\mathrm{blind}}$, yet
$M_{\mathrm{cue}} \not\sim M_{\mathrm{blind}}$, and the policy reaches each of the two
cue-tracking automaton states with probability $1/2$. The separating history is
$(\mathrm{wait}, \mathrm{left})\,(\mathrm{run}, \mathrm{right})$, on which the cue model
assigns likelihood $0$ and the blind model $1/4$.
\end{proposition}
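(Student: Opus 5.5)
The argument is a direct computation once the two models are written down concretely. I would take $O = \{\mathrm{left},\mathrm{right}\}$ and $A = \{\mathrm{wait},\mathrm{run}\}$.

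For $M_{\mathrm{blind}}$, the single state $q_{\mathrm{b}}$ has $e(q_{\mathrm{b}},a)$ uniform on $O$ for both actions.

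For $M_{\mathrm{cue}}$, I would use states $q_0, q_L, q_R$. From $q_0$, any action $a$ with observation $o$ moves to $q_L$ if $o=\mathrm{left}$ and to $q_R$ if $o=\mathrm{right}$. The states $q_L$ and $q_R$ are absorbing. Both $e(q_0,\mathrm{wait})$ and $e(q_0,\mathrm{run})$ are fair coins. In $q_L$ and $q_R$, $e(\cdot,\mathrm{wait})$ is a fair coin, while $e(q_L,\mathrm{run})=\delta_{\mathrm{left}}$ and $e(q_R,\mathrm{run})=\delta_{\mathrm{right}}$. I would fix exactly this specification, since the proposition refers to the ``three automaton states'' and the ``two cue-tracking states'' without spelling them out.

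The proof then has three steps.

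\textbf{Step 1: $\pi$-equivalence.} Under the always-$\mathrm{wait}$ policy, $\Pi(h)>0$ holds exactly when every action in $h$ is $\mathrm{wait}$. For such $h$ of length $T$, every factor in Definition~\ref{def:histlik} is $e(q,\mathrm{wait})(o_t)=1/2$, whatever state $q$ the automaton is in. Hence $L_{M_{\mathrm{cue}}}(h)=2^{-T}=L_{M_{\mathrm{blind}}}(h)$, which gives $M_{\mathrm{cue}}\eqpi M_{\mathrm{blind}}$ by Definition~\ref{def:pieq}. A short induction on $T$ makes this rigorous. The only point to note is that the argument does not depend on tracking $\hat\delta$, because the emission under $\mathrm{wait}$ is the same in every state.

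\textbf{Step 2: global inequivalence.} I would evaluate both likelihoods on $h=(\mathrm{wait},\mathrm{left})(\mathrm{run},\mathrm{right})$. In the blind model, $L=\tfrac12\cdot\tfrac12=\tfrac14$. In the cue model, the first step contributes $\tfrac12$ and moves to $\hat\delta=q_L$. The second step contributes $e(q_L,\mathrm{run})(\mathrm{right})=0$, so $L=0$. Since $0\neq\tfrac14$, Definition~\ref{def:globaleq} fails.

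\textbf{Step 3: coverage.} Under the policy, the first observation is a fair coin. The automaton therefore sits in $q_L$ with probability $\tfrac12$ and in $q_R$ with probability $\tfrac12$ from time $1$ onward, since both states are absorbing. So each cue-tracking state is reached with probability $1/2$, while $q_0$ is visited with certainty at time $0$.

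I do not expect a real obstacle here. The only delicate point is pinning down the construction so that ``remembers the first observation'' and ``reveals the cue deterministically'' are consistent with the stated separating history. For a Lean formalisation, the main work would be the induction in Step 1 over arbitrary-length all-$\mathrm{wait}$ histories. I would handle it with a lemma that every factor equals $1/2$, proved by case analysis on the state without computing $\hat\delta$.
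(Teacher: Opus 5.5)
Your proof is correct and takes essentially the paper's route. The two models differ only in what $\mathrm{run}$ emits after the cue, and the always-$\mathrm{wait}$ policy never takes $\mathrm{run}$, so every policy-supported history of length $T$ has likelihood $2^{-T}$ under both models. The explicit two-step history then gives global inequivalence, and the fair first observation places mass $1/2$ on each of $q_{\mathrm{L}}$ and $q_{\mathrm{R}}$.

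Your specification agrees with the paper's figure and with its remark that the initial-state emission rows agree on every action: action-independent transitions, absorbing cue states, and a fair coin for $\mathrm{run}$ at $q_0$. Your induction over all lengths is what Definition~\ref{def:pieq} requires, which is slightly more than the appendix prose spells out, since it mentions only histories of length two.
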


We formalise this construction in Lean, proving $\pi$-equivalence under the wait policy,
global inequivalence, the fact that the policy reaches both automaton states, and
Theorem~\ref{thm:frozen} applied to this pair. The construction exhibits full state coverage
together with complete failure of identification.

\section{Deciding \texorpdfstring{$\pi$}{pi}-Equivalence}
\label{sec:measurezero}

This section identifies the criterion that Proposition~\ref{prop:collapse} separates from
state coverage. The natural first conjecture, stated over the set of histories on which the
likelihoods differ, is true, but that set need not be regular, so the conjecture yields no
decision procedure (Appendix~\ref{sec:app-conjecture}).

\subsection{Product automaton}

Fix $M = (Q,q_0,\delta,e)$ and $M' = (Q',q_0',\delta',e')$ and form the \emph{product
automaton} $M \times M'$ on state set $Q \times Q'$, with initial state $(q_0, q_0')$ and
transition $\bigl((q,q'),a,o\bigr) \mapsto \bigl(\delta(q,a,o),\,\delta'(q',a,o)\bigr)$. Each
history $h$ drives it to a unique pair $\hat\Delta(h) \in Q \times Q'$.

\begin{definition}[Action support]
\label{def:support}
The \emph{action support} of the behaviour policy is
$A_\pi = \{\, a \in A : \pi(h)(a) > 0 \text{ for some } h \text{ with } \Pi(h) > 0 \,\}$,
the set of actions $\pi$ can ever take.
\end{definition}

\begin{assumption}[History-independent action support]
\label{asm:support}
The action support does not vary with history, so $\pi(h)(a) > 0$ for every
$a \in A_\pi$ and every $h$ with $\Pi(h) > 0$.
\end{assumption}

Assumption~\ref{asm:support} is not required by Theorem~\ref{thm:measurezero}, which weights
every candidate blame step by the probability the policy assigns to it. It is
required by Corollary~\ref{cor:decide} and by Algorithm~\ref{alg:decide}, which test a state
pair without such a weight. Every policy in our suite satisfies it, and
Remark~\ref{rem:policycaveat} gives the reduction for a policy whose support does vary.

\begin{definition}[Disagreeing pairs and the blame language]
\label{def:blame}
The set of \emph{disagreeing pairs}, indexed by the action that witnesses the
disagreement, is
\[
\begin{aligned}
B_\pi \;=\; \bigl\{\, ((q,q'),a) &\in (Q \times Q') \times A_\pi \;:\; \exists\, o \in O \\[-1pt]
  &\text{with } e(q,a)(o) \neq e'(q',a)(o) \,\bigr\},
\end{aligned}
\]
and the \emph{blame language} is
\[
L_{B_\pi} \;=\; \bigl\{\, h\,(a,o) \in \Hist \;:\; \bigl(\hat\Delta(h),\,a\bigr) \in B_\pi \,\bigr\},
\]
the histories drive the product automaton into a disagreeing pair and take the action on which it disagrees.
\end{definition}

Restricting the quantifier to $A_\pi$ makes the pair test policy-relative, and carrying the
trailing action weights each candidate blame step by $\pi(h)(a)$.
Appendix~\ref{sec:app-scope} shows why each device is needed. The set $B_\pi$ depends only
on the two emission tables and the action support, so deciding membership is a finite check
that involves no products of probabilities, and $L_{B_\pi}$ is recognized by $M \times M'$
with an accepting condition on state pair and action. The blame
language is therefore regular by construction, and it is the object over which the question
should be posed.

\paragraph{Relation to classical equivalence checking.} The product is the construction that
decides deterministic finite automaton (DFA) equivalence \citep{hopcroftkarp1971} and probabilistic bisimulation
\citep{larsen1991,givan2003}, but the question asked of it differs in the two devices
above. Those notions quantify over every input, so each refines global equivalence
(Definition~\ref{def:globaleq}) and each calls the pair of Proposition~\ref{prop:collapse}
distinguishable. That verdict is correct about the two automata, but the corpus exhibits no
such distinction at any sample size. The trailing action and the action-before-observation
ordering also keep the blame-step weight $\Pi(h)\,\pi(h)(a)$ common to both models, so no
policy can obtain a certificate by weighting the disagreeing step unevenly under the two
candidates (Appendix~\ref{sec:app-scope}). Bisimulation metrics \citep{ferns2004} measure
how far apart two states of one model are and do not condition on a policy. The quantitative
refinement here is instead the separation $\gamma$ of \S\ref{sec:scale}.

\subsection{The characterization at finite horizon}

Both models assign $L_{B_\pi}$ the same probability under $\pi$
(Appendix~\ref{sec:app-scope}), so we write $\Prob^\pi(L_{B_\pi})$ without a model subscript.

\begin{theorem}[Measure-zero characterization]
\label{thm:measurezero}
$M \eqpi M'$ if and only if $\Prob^\pi(L_{B_\pi}) = 0$, that is, if and only if the behaviour
policy assigns probability zero to ever driving the product automaton into a disagreeing pair
\emph{and there taking an action on which the two emission rows differ}. The statement holds
at every finite horizon $T$, for histories of length at most $T$, and at infinite horizon, for
the event that some prefix of the trajectory lies in $L_{B_\pi}$.
The behaviour policy is arbitrary, so its action support may vary with history and
Assumption~\ref{asm:support} is not used.
\end{theorem}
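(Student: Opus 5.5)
The plan is to reduce both directions to one bookkeeping identity for the event of \emph{first} entry into the blame language. Call a step $t$ of a history $h$ a \emph{blame step} if $(\hat\Delta(h_{<t}),a_t)\in B_\pi$. Let $F_T$ be the set of pairs $(g,a)$ with $|g|\le T-1$, $\Pi(g)\,\pi(g)(a)>0$ and $(\hat\Delta(g),a)\in B_\pi$, such that no step of $g$ is a blame step. The first-entry events indexed by $F_T$ are disjoint, and their union is the event that some prefix of length at most $T$ lies in $L_{B_\pi}$. Summing out the trailing observation, this gives
\[
\Prob^\pi_M(L_{B_\pi}\cap \text{len}\le T)\;=\;\sum_{(g,a)\in F_T}\Pi(g)\,\pi(g)(a)\,L_M(g).
\]
The infinite-horizon event is the increasing union over $T$, so it has probability zero iff every finite-horizon version does. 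It therefore suffices to prove the equivalence at each finite $T$, with $M\eqpi M'$ read as agreement on histories of length at most $T$ with $\Pi>0$.

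The first step is a \emph{no-blame lemma}: if $\Pi(g)>0$ and $g$ has no blame step, then $L_M(g)=L_{M'}(g)$. At each step $t$ we have $\pi(g_{<t})(a_t)>0$, so $a_t\in A_\pi$. Since $(\hat\Delta(g_{<t}),a_t)\notin B_\pi$, the rows $e(q,a_t)$ and $e'(q',a_t)$ coincide, and the factors in Definition~\ref{def:histlik} agree term by term. This lemma shows that the displayed sum is the same for $M$ and $M'$, which justifies the model-free notation $\Prob^\pi(L_{B_\pi})$. It is also where the action-before-observation ordering enters: the weight $\Pi(g)\,\pi(g)(a)$ never involves the disagreeing emission.

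For ($\Leftarrow$), assume the sum is zero and let $h$ have $\Pi(h)>0$ and $|h|\le T$. If $h$ has no blame step, the lemma gives $L_M(h)=L_{M'}(h)$. Otherwise take its first blame step $t$; then $(h_{<t},a_t)\in F_T$ with positive policy weight. The sum therefore forces $L_M(h_{<t})=L_{M'}(h_{<t})=0$, and since likelihoods of extensions factor through the prefix, $L_M(h)=L_{M'}(h)=0$.

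For ($\Rightarrow$), suppose the sum is positive. Then some $(g,a)\in F_T$ has $L_M(g)=L_{M'}(g)>0$, and by definition of $B_\pi$ there is an $o$ with $e(q,a)(o)\ne e'(q',a)(o)$. The history $g\,(a,o)$ has positive $\Pi$, yet its two likelihoods $L_M(g)\,e(q,a)(o)$ and $L_{M'}(g)\,e'(q',a)(o)$ differ because the common factor is nonzero. This contradicts $M\eqpi M'$.

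The main obstacle is the decomposition itself. Written naively over all of $L_{B_\pi}$ the sum overcounts, and it need not be model-independent, because after a blame step the two models' likelihoods diverge. Restricting to first entries, and summing out the trailing observation so that only the agreeing prefix likelihood remains, is the step I would set up carefully first. Assumption~\ref{asm:support} is never used, since every positivity claim is checked along the actual history.
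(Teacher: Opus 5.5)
Your proof is correct and follows the paper's argument. It uses the same decomposition of $L_{B_\pi}$ into first-entry blocks of mass $\Pi(g)\,\pi(g)(a)\,L_M(g)$, the same observation that a positive-probability prefix with no blame step has equal likelihood under both models, and the same witness argument for the direction from positive mass to non-equivalence. The differences are cosmetic: for the converse you argue directly through the first blame step of an arbitrary history, where the paper takes a shortest disagreeing history, and you get infinite horizon by continuity over the increasing events $E_T$, where the paper uses countable additivity over first-entry cylinders.
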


\begin{proof}[Proof sketch, finite horizon]
Both directions turn on one decomposition. Call $h\,(a,o) \in L_{B_\pi}$ a \emph{first entry}
when no proper prefix of it lies in $L_{B_\pi}$. When $\Pi(h) > 0$ every action taken in $h$
lies in $A_\pi$, so at every step of the prefix of a first entry the two models
consult emission rows that agree and $L_M(h) = L_{M'}(h)$. The first entries partition
$L_{B_\pi}$ into blocks $\{\,h\,(a,o') : o' \in O\,\}$, and because the rows are normalized
and the policy factor $\Pi(h)\,\pi(h)(a)$ does not depend on $o'$, each block has mass
$\Pi(h)\,\pi(h)(a)\,L_M(h)$ under either model.
Appendix~\ref{sec:app-proof} derives both directions from it.
\end{proof}

The theorem requires normalized emission rows and the action-before-observation ordering,
both of which belong to Definition~\ref{def:rdp} and to the factorization above, and
Remark~\ref{rem:hyp} identifies the block decomposition.

\subsection{A linear-time decision procedure}
\label{sec:decide}

\begin{corollary}[Decision procedure]
\label{cor:decide}
Under Assumption~\ref{asm:support}, $\pi$-equivalence is decided by reachability in the
product automaton. Explore from $(q_0,q_0')$ along transitions the policy gives positive
probability, and at each pair reached test the emission rows of the actions in $A_\pi$ only,
reporting equivalence if and only if no pair in $B_\pi$ is reached. This runs in time
$O(|Q|\,|Q'|\,|A|\,|O|)$, linear in the size of the product automaton, and is independent of
the horizon.
\end{corollary}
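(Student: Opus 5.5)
We derive the corollary from Theorem~\ref{thm:measurezero} in two steps. First we show that, under Assumption~\ref{asm:support}, $\Prob^\pi(L_{B_\pi}) = 0$ holds if and only if no pair in $B_\pi$ is reachable in a graph $G_\pi$ that depends only on $A_\pi$ and the emission tables. Then we show that the obvious search over $G_\pi$ meets the stated bound.

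Define $G_\pi$ on $Q \times Q'$ to have an edge $(q,q') \to (\delta(q,a,o),\delta'(q',a,o))$ whenever $a \in A_\pi$ and the step has positive probability, namely $e(q,a)(o) > 0$. Since we only explore pairs whose rows under the actions taken agree, this is the same as $e'(q',a)(o) > 0$.

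The key lemma is that, when the search explores only from pairs not in $B_\pi$ (and halts on reaching one), a pair $(q,q')$ is reached by the search if and only if $(q,q') = \hat\Delta(h)$ for some $h$ with $\Pi(h)\,L_M(h) > 0$ that has no prefix in $L_{B_\pi}$. We prove both directions by induction on $|h|$.

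For the forward direction, suppose we extend such an $h$ by $(a,o)$ with $a \in A_\pi$ and $e(q,a)(o) > 0$. Assumption~\ref{asm:support} gives $\pi(h)(a) > 0$, so $\Pi(h(a,o)) > 0$, and the likelihood stays positive. This is exactly where the assumption is used: without it, an action lying in $A_\pi$ might be unavailable at this particular history.

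For the converse, every step of a positive-probability history uses an action in $A_\pi$ and an observation of positive emission probability, so the history traces an edge path in $G_\pi$.

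With the lemma in hand, suppose the search hits some $(q,q')$ with $((q,q'),a) \in B_\pi$. The witnessing history $h$ has $\Pi(h)\,\pi(h)(a)\,L_M(h) > 0$, and this is the mass of the first-entry block $\{h(a,o')\}$ from the proof of Theorem~\ref{thm:measurezero}. Hence $\Prob^\pi(L_{B_\pi}) > 0$. Conversely, if $\Prob^\pi(L_{B_\pi}) > 0$, some first entry $h(a,o)$ has positive mass. Its prefix $h$ avoids $L_{B_\pi}$ and has positive probability, so by the lemma the search reaches $\hat\Delta(h)$ and detects the pair in $B_\pi$. Theorem~\ref{thm:measurezero} then converts this reachability criterion into $\pi$-equivalence at every horizon and at infinite horizon. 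Because the criterion mentions no horizon, the procedure is horizon-independent.

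The main obstacle is the asymmetry in positivity of $L_M$ versus $L_{M'}$ along a path. We handle it by the invariant that along any explored path all consulted rows agree (otherwise a $B_\pi$ pair would already have been reported), so $L_M(h) = L_{M'}(h)$ and the positivity test may use either model.

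For the complexity, run BFS/DFS with a visited array over $|Q|\,|Q'|$ pairs. Each pair is dequeued at most once. At each dequeued pair, for each $a \in A$ (restricted to $A_\pi$), we compare two rows of length $|O|$ and generate at most $|O|$ successors in $O(1)$ each by table lookup. The total work is therefore $O(|Q|\,|Q'|\,|A|\,|O|)$, which is linear in the size of the transition table of $M \times M'$.
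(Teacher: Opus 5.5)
Your proposal is correct and takes the paper's route: you reduce to Theorem~\ref{thm:measurezero}, use Assumption~\ref{asm:support} to show that reaching a disagreeing pair along a positive-probability history and exercising its disagreeing action there have the same null sets, and charge one visit per pair for the $O(|Q|\,|Q'|\,|A|\,|O|)$ bound. One thing needs fixing: the converse half of your lemma (history $\Rightarrow$ reached) is false as stated, because the search halts at the first $B_\pi$ pair it meets, for instance a pair that lies in $B_\pi$ under some action while $h$ takes a different, agreeing action there; weaken it to ``either the search reports a pair in $B_\pi$ or it reaches $\hat\Delta(h)$,'' which the same induction proves and which is all your final step needs.
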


Assumption~\ref{asm:support} makes the test at a pair independent of the history that reached
it, so one visit settles each pair and the bound does not depend on the horizon. \textsc{Pec} returns a certificate of
$\pi$-equivalence when no disagreeing pair is reachable and a witness naming the missing
experiment when one is. Algorithm~\ref{alg:decide} in Appendix~\ref{sec:supp-algo} states it,
and the frontier is a queue, so the witness returned has minimum depth. \textsc{Pec} reports
that depth as the \emph{separating horizon}, the length of the shortest separating history,
which counts the witness action itself. \textsc{Pec} thereby replaces enumeration over
histories, which costs time exponential in the horizon.

\paragraph{The witness.} When the models are not $\pi$-equivalent, the search returns the
disagreeing pair it reached together with the action on which the emission rows differ. On
the T-maze under a deviation this witness is
$\bigl(\,(q_{\mathrm{L}}, q_{\mathrm{b}}),\ \mathrm{run}\,\bigr)$, naming $\mathrm{run}$ in
the state pair where the cue model remembers a left cue and the blind model sits in its
only state. That is the missing experiment, named without any search over policies, and it is
the input to the design problem of \S\ref{sec:eig}.

\paragraph{Validation.} On 3700 model pairs, 2500 random and 1200 constructed to be
$\pi$-equivalent, \textsc{Pec} agreed with a brute-force check that compares the two likelihoods on
every policy-supported history up to a fixed horizon.

\subsection{Behaviour at scale}
\label{sec:scale}

\begin{figure}[t]
\centering
\includegraphics[width=\columnwidth]{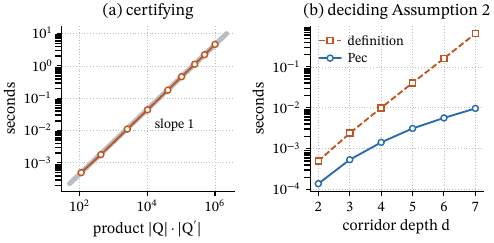}
\caption{Cost of deciding identifiability, on log axes. (a)~certifying $\pi$-equivalence
against product size. (b)~cost of deciding Assumption~2 of \citet{regorl} from its definition
versus with \textsc{Pec}.}
\label{fig:certify}
\end{figure}

The bound of Corollary~\ref{cor:decide} admits direct measurement. On constructed
$\pi$-equivalent pairs built from two cycles of coprime length, whose reachable product is
exactly $|Q|\cdot|Q'|$, \textsc{Pec} explores from $110$ to $1{,}001{,}000$ state pairs at
a cost flat to within $8\%$ per pair and a least-squares slope of $\log(\text{time})$ against
$\log(\text{pairs})$ of $1.003$, plotted in Figure~\ref{fig:certify}(a). A certificate costs
the whole reachable product while a
witness costs only the part preceding it, so at a product of $1{,}001{,}000$ a witness at
depth $999$ returns in $7.2\times10^{-3}$ seconds against $4.65$ seconds to certify the same
instance. Appendix~\ref{sec:app-scaling} reports the full sweeps together with the
action-support factor.

Definition~\ref{def:blame} tests emission rows for exact inequality, whereas estimated
tables are built from counts. Two rows estimated from the same distribution then differ with
probability approaching one as the corpus grows, so the exact test almost never certifies.
Replacing the test by $\max_o |e(q,a)(o) - e'(q',a)(o)| > \tau$ for a tolerance $\tau$
repairs this. Write $\eta$ for the sup-norm error of the estimated tables over the rows
\textsc{Pec} consults and $\gamma$ for the smallest sup-norm separation among pairs whose
exact rows disagree. Truly equal rows are estimated at most $2\eta$ apart and truly separated
ones at least $\gamma - 2\eta$ apart, so the tolerant test returns the exact-arithmetic
verdict whenever $2\eta < \tau < \gamma - 2\eta$, a window non-empty as soon as
$\eta < \gamma/4$. Since $\eta$ shrinks as the corpus grows while $\gamma$ is a property of
the candidates, the window opens at a finite sample size. On the corridor family at 600
trajectories per candidate, $\eta$ ranges from $0.072$ to $0.098$ against $\gamma = 1/2$, and
every tolerance inside the window reproduces exact arithmetic (Appendix~\ref{sec:app-robust}).

\section{Experiments}
\label{sec:experiments}

\paragraph{Code and proofs.} The implementation, the synthetic environments, and the Lean~4
development containing the machine-checked proofs are released at
\url{https://github.com/Kcbir/pec}.

\paragraph{Protocol.} All environments are synthetic Regular Decision Processes with known
ground truth, and all data are simulated. Rotating MAB and Cheat MAB, two multi-armed bandit (MAB) variants, are taken from
\citet{abadi2020}, and Rotating Maze is a reduced version of the grid-world domain of the same
name introduced there, so three of the four come from established benchmarks for this model
class. Known ground truth makes the identifiability claims checkable, since a frozen
posterior can be verified only against the model the data failed to distinguish. Inference
uses exact rational arithmetic throughout, with no sampling and no training, emission
parameters being integrated out under a Dirichlet prior to keep the structure posterior
rational (Appendix~\ref{sec:app-ml}). Entropy requires a logarithm, so expected information gain alone is computed in floating point, whereas the separating mass of Definition~\ref{def:sepmass} is exact.

\paragraph{Scope.} A standardized partially observable benchmark such as POPGym
\citep{popgym} supplies neither the candidate pair nor the behaviour policy that \textsc{Pec}
takes as input. The method requires a known pair whatever the size of the environment, and
\S\ref{sec:scale} addresses scale directly.

\subsection{The distinguishability assumption of RegORL}
\label{sec:regorl}

The guarantee of \citet{regorl} rests on their Assumption~2, that the behaviour policy has
$L^p_\infty$-distinguishability at least $\mu_0 > 0$ between every pair of distinct
same-layer states of the minimal RDP. Theorem~6 of \citet{regorl} requires a corpus whose
size is proportional to $1/\mu_0$ (Appendix~\ref{sec:app-regorl}), so the requirement
diverges as $\mu_0$ goes to zero. Table~\ref{tab:ident} computes that parameter
in exact rational arithmetic beside the separating mass of Definition~\ref{def:sepmass}.

\begin{table*}[t]
\centering\small
\caption{Separating mass and the distinguishability parameter of \citet{regorl} on four
environments, as exact rationals, against the verdict \textsc{Pec} returns on the same
instance.}
\label{tab:ident}
\begin{tabular}{lrccccll}
\toprule
& & \multicolumn{2}{c}{separating mass} & \multicolumn{2}{c}{$\mu_0$} & & \\
\cmidrule(lr){3-4}\cmidrule(lr){5-6}
Environment & $T$ & under $\pi$ & under $\pi'$ & under $\pi$ & under $\pi'$
  & Assumption~2 & \textsc{Pec} \\
\midrule
T-maze        & 2 & $\mathbf{0}$ & $1/2$     & $\mathbf{0}$ & $1/2$   & fails    & equivalent \\
Rotating MAB  & 3 & $\mathbf{0}$ & $245/512$ & $\mathbf{0}$ & $1/4$   & fails    & equivalent \\
Cheat MAB     & 3 & $\mathbf{0}$ & $1/9$     & $\mathbf{0}$ & $1/24$  & fails    & equivalent \\
Rotating Maze & 4 & $\mathbf{0}$ & $1/8$     & --           & $9/256$ & holds (trivially) & equivalent \\
\bottomrule
\end{tabular}
\end{table*}

On three of the four environments $\mu_0 = 0$ as an exact rational, so Assumption~2 fails and
the required corpus size diverges. The separating mass is an exact rational zero on each of
them, and two of the three are benchmarks from \citet{abadi2020}. On Rotating Maze the
assumption instead holds trivially, since the behaviour policy reaches a single state per
layer and there is no pair of distinct states to compare. The two criteria overlap without
either containing the other, because $\mu_0$ compares two states of the true model that the
policy reaches while $\pi$-equivalence compares two candidate models. An ambiguity in a region the policy never enters preserves Assumption~2, as in Rotating Maze, while \textsc{Pec} certifies $\pi$-equivalence.

\paragraph{Deciding Assumption~2.} For a pair of states distinct in the minimal RDP,
$L^p_\infty > 0$ holds exactly when some $\pi$-supported trace prefix within the remaining
horizon receives different probability from the two states. By
Theorem~\ref{thm:measurezero} applied to the sub-automaton started at $(q,q')$, this is the
condition that \textsc{Pec}, started from $(q,q')$, reports a separating horizon no greater
than the number of steps remaining. The definition maximizes over $(|A|\,|O|)^T$ trace
prefixes at horizon $T$, whereas \textsc{Pec} explores at most $|Q|\cdot|Q'|$ pairs without
reference to the horizon. On the
corridor family of \S\ref{sec:tier1c} the measured ratio grows from $3.7$ at depth $2$ to
$67.5$ at depth $7$ (Figure~\ref{fig:certify}(b)), and the two methods agree everywhere
(Appendix~\ref{sec:app-regorl}). Under the designed deviation $\mu_0$ becomes positive in
every environment, so the experiment \textsc{Pec} names is the one that restores their
assumption.

\subsection{Selecting policy deviations}
\label{sec:eig}

Theorem~\ref{thm:frozen} shows that a $\pi$-equivalent pair is uncorrectable under $\pi$, so
the remedy is to change $\pi$. We select policy deviations by expected information gain (EIG) under
the posterior over the candidate structures, which applies Bayesian active learning by disagreement
(BALD) \citep{bald} to automaton
structure in a controlled process. Let $\{M_i\}$ be the candidate models with prior weights
$\{w_i\}$, and let a candidate experiment be an \emph{open-loop action sequence}
$\alpha = (a_1,\dots,a_d)$, executed by the degenerate policy $\pi_\alpha$ that plays $a_t$
at step $t$ regardless of what it observes. The prior-predictive probability of a history
$h \in \Hist_d$ under $\pi_\alpha$ is
\[
P_\alpha(h) \;=\; \Pi_\alpha(h) \sum_i w_i\, L_{M_i}(h),
\]
where $\Pi_\alpha(h) = \prod_{t=1}^{d} \mathbf{1}[a^h_t = a_t]$ takes the value $1$ on the
histories whose actions are $\alpha$, and the posterior after $h$ is
$w_i(h) \propto w_i L_{M_i}(h)$. The expected information gain of
the sequence is
\[
\mathrm{EIG}(\alpha) \;=\; H\bigl(\{w_i\}\bigr)
\;-\;\!\!\sum_{\substack{h \in \Hist_d \\ P_\alpha(h) > 0}}\!\! P_\alpha(h)\;
H\bigl(\{w_i(h)\}\bigr),
\]
with $H$ the Shannon entropy in nats. Greedy design maximizes $\mathrm{EIG}$ over the
one-step extensions of the sequence built so far, breaking ties by the order in which actions
are enumerated, while exhaustive design at lookahead $\ell$ maximizes it over all $|A|^\ell$
sequences at a cost of $O((|A|\,|O|)^\ell)$ likelihood computations. \textsc{Pec} returns the
required $\ell$ in linear time, so the depth of that exponential search is known before it
is run.

\begin{remark}[Zero first-step gain]
\label{prop:nogradient}
In every environment in our suite, the expected information gain of the first action is
\emph{exactly} zero.
\end{remark}

In each environment the candidates' emission rows at their initial states agree on every
action, so the separating horizon is at least $2$, whereas a pair that disagrees at its
initial states would have positive first-step gain. A greedy one-step rule sees
equal gain for every first action here, and its tie-break determines what it does. We compare greedy against exhaustive design over sequences whose length equals
the separating horizon, running greedy twice with the action order reversed.

\begin{table}[t]
\centering\small
\caption{Separating mass achieved by each design rule, as exact rationals. The two greedy
columns differ only in the order in which actions are enumerated for the tie-break, and $\ell$
is the lookahead at which exhaustive design is run.}
\label{tab:greedy}
\begin{tabular}{lcccc}
\toprule
& & \multicolumn{2}{c}{greedy} & \\
\cmidrule(lr){3-4}
Environment & $\ell$ & order & reversed & exhaustive \\
\midrule
T-maze        & 2 & $1$   & $1$   & $1$ \\
Rotating MAB  & 2 & $1/2$ & $1/4$ & $3/4$ \\
Cheat MAB     & 3 & $\mathbf{0}$ & $\mathbf{0}$ & $1$ \\
Rotating Maze & 4 & $\mathbf{0}$ & $1$   & $1$ \\
\bottomrule
\end{tabular}
\end{table}

Exhaustive design succeeds in every environment, whereas greedy design succeeds on two of the
four. Cheat MAB is a structural failure of the greedy rule, since the experiment requires two
different actions in sequence and neither action yields any gain alone, so greedy achieves
separating mass $0$ under both tie-breaks while exhaustive design recovers the unlock
sequence $(\mathrm{arm1}, \mathrm{arm2})$ from the posterior without being given it. On
Rotating Maze the same greedy rule achieves $0$ or $1$ according to the order in which
actions are enumerated, so its success there derives from the tie-break alone, and myopic
design is therefore unreliable.

\subsection{A parameterized family and the required lookahead}
\label{sec:tier1c}

The four environments above are fixed, small instances, which leaves open whether the phenomenon
survives a controlled change in difficulty. A member of depth $d$ is a corridor of $d$ rooms
with the missing experiment placed at the far end. The behaviour policy walks the corridor
without probing, and the deviation walks and then probes. The alphabet, the emission values,
and the way the two candidates differ stay fixed, and only the corridor length varies, so a curve
over the family isolates depth without confounding it with model size or stochasticity.
For every member, \textsc{Pec} reports no reachable disagreeing pair under $\pi$ and reports
a separating horizon of exactly $d$ under the deviation (Table~\ref{tab:tier1c},
Appendix~\ref{sec:app-corridor}), so the parameter is read off the structure, with no
sweep over horizons.

Sweeping the design lookahead $\ell$ against depth $d$ gives a transition that is exact,
since a lookahead below the depth finds nothing while a lookahead equal to it finds the
experiment (Table~\ref{tab:lookahead}, Appendix~\ref{sec:app-corridor}). For open-loop
search this follows from the definitions, and the sweep confirms it in the implementation, so
one structural quantity serves two roles. The separating horizon measures data
difficulty, since it ranks the suite in the small-data crossover of
Appendix~\ref{sec:ablations}, and it measures planning difficulty, since it equals the
lookahead a designer needs.

\subsection{End-to-end evaluation}
\label{sec:method}

On the four environments of Table~\ref{tab:ident} the designed experiment does not improve on
uniform exploration. Over ten seeds, a corpus gathered under $\pi$ leaves the $\log_{10}$
Bayes factor at exactly zero in all four environments and on every seed. On three of them
the deviation assigns equal probability to every action, so the designed and the random
corpus coincide. On Rotating MAB they differ by less than the seed variation, with
means and half-ranges of $161.93 \pm 20.04$ and $148.86 \pm 17.89$
(Appendix~\ref{sec:app-pipeline}). The separating horizons here run from $2$ to $4$ over $2$
or $3$ actions, so uniform exploration finds the missing experiment without help.
\label{sec:method-negative}

The boundary between the two regimes follows from the structure of the family. To find the
missing experiment of a member with separating horizon $d$, a uniform policy must
emit a specific action sequence of length $d$, namely $d-1$ steps along the corridor and then
the probe, which it does with probability near $2^{-d}$, whereas a designed experiment
executes that sequence every time. The advantage therefore grows exponentially in $d$, which \S\ref{sec:tier1c}
identifies as the difficulty axis.
\label{sec:method-depth}

\begin{table}[h]
\centering\small
\setlength{\tabcolsep}{4pt}
\caption{Mean $\log_{10}$ Bayes factor over 10 seeds by corridor depth, at budgets of 5 and
25 trajectories.}
\label{tab:depth}
\begin{tabular}{rrrcrrc}
\toprule
& \multicolumn{3}{c}{budget 5} & \multicolumn{3}{c}{budget 25} \\
\cmidrule(lr){2-4}\cmidrule(lr){5-7}
depth $d$ & rand. & des. & ratio & rand. & des. & ratio \\
\midrule
2 & $1.51$ & $3.01$ & $2.0\times$ & $8.49$ & $15.05$ & $1.8\times$ \\
4 & $0.57$ & $3.01$ & $5.3\times$ & $3.43$ & $15.05$ & $4.4\times$ \\
6 & $0.39$ & $3.01$ & $7.7\times$ & $1.51$ & $15.05$ & $10.0\times$ \\
8 & $0.06$ & $3.01$ & $\mathbf{50.0\times}$ & $0.39$ & $15.05$ & $\mathbf{38.5\times}$ \\
\bottomrule
\end{tabular}
\end{table}

The designed column does not vary with depth, because the designed experiment reaches the
separating history every time, whereas the yield of uniform exploration decays exponentially.
Appendix~\ref{sec:app-eqtest} repeats the comparison against the access-sequence heuristic
that active automata learning libraries ship, which converts the exponential decay into a
polynomial one and, at depth eight, narrows the gap in separating mass from a factor of $256$
to a factor of about $8$. Which regime a problem occupies is settled by the linear-time detection
step, which returns $d$ together with the witness.

\section{Limitations}
\label{sec:limits}

The observation set is finite throughout, and continuous observations lie outside every
result here. The decision procedure is exact, so on estimated tables it requires the tolerance
of \S\ref{sec:scale} and holds only inside the window $2\eta < \tau < \gamma - 2\eta$,
which needs ever larger corpora as the candidates approach equivalence. Exhaustive design
remains exponential in the lookahead, and \textsc{Pec} supplies the required depth without
reducing that cost. Exact enumeration bounds the hypothesis space at five automaton states. When $\pi$ is unknown, \textsc{Pec} reads its action support off the
corpus, so an action taken too rarely to appear there is treated as never taken.
\textsc{Pec} also decides a question about a given pair, and whether a corpus identifies a
model within an implicitly specified hypothesis class remains open.

\section{Conclusion}

This paper has addressed whether data collected under a fixed behaviour policy can distinguish
two candidate Regular Decision Processes, a question on which existing offline guarantees for
this model class depend but which they provide no means to verify. We have shown that
$\pi$-observationally equivalent candidates leave the posterior odds at the prior odds at
every sample size, and that this condition is not implied by state coverage. We have
characterized $\pi$-equivalence exactly over the disagreeing state--action pairs of the
product automaton, whose blame language, unlike the set of histories on which the likelihoods
differ, is regular by construction. The characterization yields \textsc{Pec}, which decides
equivalence in time linear in the size of the product automaton and, when equivalence fails,
names the experiment that the corpus lacks. Applied to the distinguishability assumption of
\citet{regorl}, the procedure establishes that the assumption fails on three of our four test
environments and that the experiment it names restores the assumption in each case. Extending the analysis to continuous observations and implicitly specified hypothesis classes remains for future work.

\bibliographystyle{abbrvnat}
\bibliography{references}

\clearpage

\appendix
\thispagestyle{empty}
\onecolumn
\aistatstitle{Exact Distinguishability in Non-Markovian Decision Processes: Supplementary Materials}

\renewcommand{\thesection}{S\arabic{section}}
\renewcommand{\thefigure}{S\arabic{figure}}
\renewcommand{\thetable}{S\arabic{table}}
\renewcommand{\thealgorithm}{S\arabic{algorithm}}
\setcounter{figure}{0}
\setcounter{table}{0}

\section{The decision procedure}
\label{sec:supp-algo}

Algorithm~\ref{alg:decide} states \textsc{Pec}, the procedure of \S\ref{sec:decide}. It explores the
product automaton within the support of the behaviour policy and halts at the first
disagreeing pair. A returned witness names the missing experiment, and a returned $\bot$
certifies $\pi$-equivalence at every horizon.

\begin{algorithm}[!ht]
\caption{\textsc{Pec}: decide $\pi$-equivalence and return a witness.}\label{alg:decide}
\begin{algorithmic}[1]
\Require models $M = (Q,q_0,\delta,e)$ and $M' = (Q',q_0',\delta',e')$, behaviour policy $\pi$
\Ensure a pair $((q,q'),a)$ naming the missing experiment, or $\bot$ if $M \eqpi M'$
\State $A_\pi \gets \{\, a \in A : \pi(\lambda)(a) > 0 \,\}$
  \Comment{action support, history-independent by assumption}
\State $s_0 \gets (q_0, q_0')$;\quad $S \gets \{s_0\}$;\quad
  $F \gets$ queue holding $s_0$
\While{$F \neq \emptyset$}
  \State dequeue $(q,q')$ from $F$
  \ForAll{$a \in A_\pi$}
    \If{$\exists\, o \in O$ with $e(q,a)(o) \neq e'(q',a)(o)$}
      \State \Return $((q,q'),a)$ \Comment{reachable disagreeing pair}
    \EndIf
  \EndFor
  \ForAll{$a \in A_\pi$ and $o \in O$ with $e(q,a)(o) > 0$}
    \State $s \gets \bigl(\delta(q,a,o),\, \delta'(q',a,o)\bigr)$
    \If{$s \notin S$}
      \State $S \gets S \cup \{s\}$;\quad enqueue $s$ on $F$
    \EndIf
  \EndFor
\EndWhile
\State \Return $\bot$ \Comment{no disagreeing pair reachable}
\end{algorithmic}
\end{algorithm}

Each state pair enters $S$ at most once, and the work at a state is
$O(|A|\,|O|)$, so \textsc{Pec} runs in time $O(|Q|\,|Q'|\,|A|\,|O|)$ and uses
$O(|Q|\,|Q'|)$ space. Recording the level at which the
loop returns gives the minimum separating horizon, which is the depth reported in
Table~\ref{tab:tier1c} and the lookahead required in Table~\ref{tab:lookahead}.

Two details of the frontier determine which experiment is returned and what it costs. The
disagreement test runs as each pair is dequeued, before the reachable set is complete, so a
witness costs only the part of the product that precedes it (Appendix~\ref{sec:app-scaling}
measures this), while a certificate costs the whole of it. Because $F$ is a queue, pairs are
visited in order of distance from $(q_0,q_0')$, so the witness returned has \emph{minimum}
depth and is the same on every run. An implementation that instead scans the reachable set as
an unordered collection returns an equally valid witness that depends on the iteration order
of that collection, and on the T-maze it names $q_{\mathrm{L}}$ or $q_{\mathrm{R}}$
accordingly.

\textsc{Pec} probes the action support of $\pi$ at the empty history, which is exact under
Assumption~\ref{asm:support} (Remark~\ref{rem:policycaveat}). A policy whose support varies
with history requires folding the support into the product automaton, which enlarges the
automaton without changing the argument. This extension limits the scope of the procedure as
well as its cost. Folding is available only when the support is a \emph{regular} function of
history, that is, when some finite automaton with state set $Q_\pi$ reads the
action--observation sequence and determines $A_\pi$ from its state. The search then runs over
$Q \times Q' \times Q_\pi$, and the two bounds above pick up a factor of $|Q_\pi|$, giving
$O(|Q|\,|Q'|\,|Q_\pi|\,|A|\,|O|)$ time and $O(|Q|\,|Q'|\,|Q_\pi|)$ space, still with no
horizon. If
the support is an arbitrary function of history the folded automaton is infinite, reachability
over it is not a finite computation, and \textsc{Pec} does not apply. Regularity of the
policy's support is therefore the outer limit of the method.

\section{The T-maze construction}
\label{sec:supp}

Figure~\ref{fig:mechanism} displays the two hypotheses of \S\ref{sec:collapse}. The cue model
$M_{\mathrm{cue}}$ records the first observation and never revises it, whereas the blind model
$M_{\mathrm{blind}}$ has a single state. They differ only in what the action $\mathrm{run}$
emits, and $\mathrm{run}$ is the action the behaviour policy never takes. Over histories of
length two, the support of the behaviour policy and the set on which the two likelihoods
disagree are therefore disjoint, which is the $\pi$-equivalence asserted in
Proposition~\ref{prop:collapse}. Each of the two cue-tracking states is visited with
probability exactly $1/2$ under the behaviour policy, which is the coverage fact recorded in
the same proposition.

\begin{figure}[!ht]
\centering
\includegraphics[width=0.5\columnwidth]{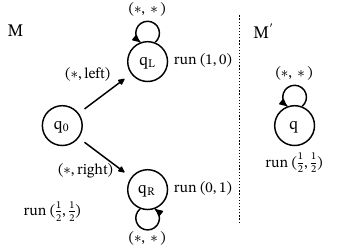}
\caption{The two hypotheses of the T-maze construction. Edges carry the
action--observation pair that drives them, as in Definition~\ref{def:rdp}, with $\ast$
standing for any element of the alphabet in that slot. Neither transition function consults
the action, so every edge is labelled $\ast$ there, and the $(\ast,\ast)$ self-loops on
$q_{\mathrm{L}}$, $q_{\mathrm{R}}$ and $q_{\mathrm{b}}$ are the ones the always-$\mathrm{wait}$
policy traverses.}
\label{fig:mechanism}
\end{figure}

\section{Value of a structural description}
\label{sec:supp-sentence}

A structural description can also enter as a prior. We encode a sentence by hand as a
predicate on transition structures and take the prior $\rho_L$ uniform on the structures it
admits, contaminated towards an uninformative prior $\rho_0$ at rate $\varepsilon$, giving
$(1-\varepsilon)\rho_L + \varepsilon\,\rho_0$ \citep{berger1986eps}. The hypothesis space is
every transition structure on at most three states over the T-maze observations, one per
state-renaming class, $229$ in all, and $\rho_0$ is uniform over it. Three sentences of
increasing specificity admit $179$, $19$, and $1$ of these structures. The \emph{vague}
sentence, ``history matters'', requires the first observation to lead to different states.
The \emph{partial} sentence, ``remembers, commits'', also requires one of those states to be
absorbing, and the \emph{saturated} sentence requires both to be, which pins the truth and so
amounts to a specification of the model. The wrong description of panel~\ref{fig:sent-wrong}
is the negation of the vague one. Panel~\ref{fig:sent-contract} shows posterior contraction towards the true model under priors
of increasing specificity, where the gap opens in the small-data regime and closes as data
accumulate. Panel~\ref{fig:sent-gain} states the same fact as a ratio against the
uninformative prior, for the partial sentence alone, which is the $19/229$ curve of
panel~\ref{fig:sent-contract} divided by the no-sentence baseline. Panel~\ref{fig:sent-wrong} covers misspecification, where a wrong
description at $\varepsilon = 0$ places zero prior mass on the truth and the posterior never
reaches it, while any $\varepsilon > 0$ recovers it.

\begin{figure}[!ht]
\centering
\begin{subfigure}[t]{0.32\textwidth}
  \centering
  \includegraphics[width=\linewidth]{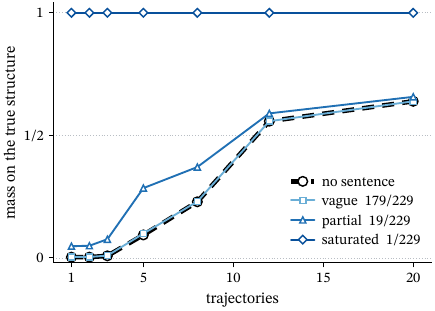}
  \caption{Posterior contraction.}
  \label{fig:sent-contract}
\end{subfigure}\hfill
\begin{subfigure}[t]{0.32\textwidth}
  \centering
  \includegraphics[width=\linewidth]{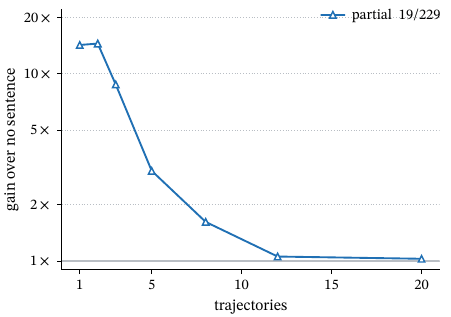}
  \caption{Gain from the partial sentence.}
  \label{fig:sent-gain}
\end{subfigure}\hfill
\begin{subfigure}[t]{0.32\textwidth}
  \centering
  \includegraphics[width=\linewidth]{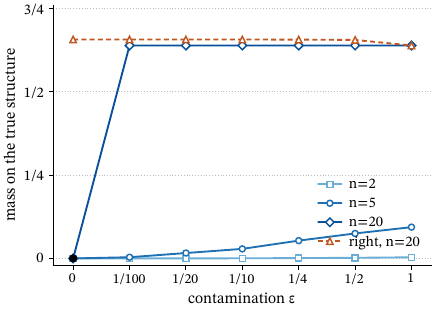}
  \caption{A wrong description.}
  \label{fig:sent-wrong}
\end{subfigure}
\caption{Effect of description specificity and of misspecification on the posterior.
Panel~(a) plots all three sentences against the no-sentence
baseline. Panel~(b) plots the partial sentence ``remembers, commits'' alone, named on the
axes, since the vague sentence is worth $1.3\times$ at its peak and the saturated one is a
specification.}
\label{fig:sentence}
\end{figure}

\section{Seed variation and prior sensitivity}
\label{sec:supp-ablations}

Figure~\ref{fig:robust} plots the seed variation, prior sensitivity and small-data crossover
reported in Appendix~\ref{sec:ablations}. Panel~\ref{fig:rob-seeds} shows the mean and range
of the $\log_{10}$ Bayes factor over 12 seeds, panel~\ref{fig:rob-alpha} the sweep over the
Dirichlet concentration, and panel~\ref{fig:rob-occam} the number of seeds that favour the
memoryless null as the number of trajectories grows.

\begin{figure}[!ht]
\centering
\begin{subfigure}[t]{0.32\textwidth}
  \centering
  \includegraphics[width=\linewidth]{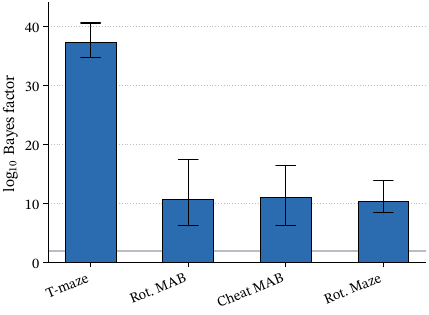}
  \caption{Variation over 12 seeds.}
  \label{fig:rob-seeds}
\end{subfigure}\hfill
\begin{subfigure}[t]{0.32\textwidth}
  \centering
  \includegraphics[width=\linewidth]{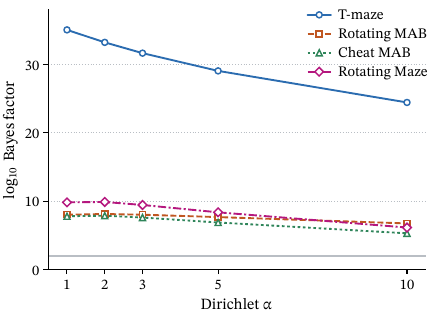}
  \caption{Dirichlet concentration sweep.}
  \label{fig:rob-alpha}
\end{subfigure}\hfill
\begin{subfigure}[t]{0.32\textwidth}
  \centering
  \includegraphics[width=\linewidth]{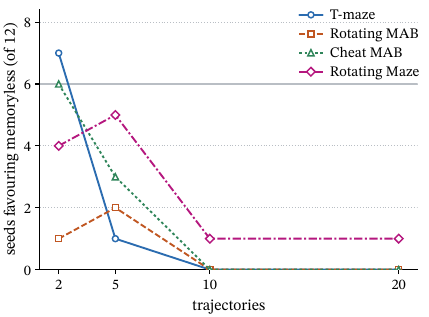}
  \caption{Small-data crossover.}
  \label{fig:rob-occam}
\end{subfigure}
\caption{Seed variation, prior sensitivity, and the small-data crossover.}
\label{fig:robust}
\end{figure}

\section{Extended Related Work}
\label{sec:app-related}

\subsection{Non-Markovian formalisms}
\label{sec:rel-formalisms}

\begin{table}[!ht]
\centering\small
\caption{Location of history-dependence across non-Markovian formalisms. MDP: Markov decision
process; POMDP: partially observable MDP; PSR: predictive state representation.}
\begin{tabular}{lll}
\toprule
Formalism & History-dependence in & Notes \\
\midrule
Reward machines & reward only & dynamics stay Markov; mature literature \\
Regular Decision Processes & dynamics \textbf{and} reward & more general; less studied \\
MDP (incl.\ $L^*_{\mathrm{MDP}}$ setting) & none, state is Markov & structure learning $\neq$ non-Markovian dynamics \\
POMDP & latent state & more general, much less tractable \\
PSR / spectral & predictive state & learning-theoretic, different toolchain \\
\bottomrule
\end{tabular}
\end{table}

\citet{toroicarte2022rm} give the standard reference for reward machines, where recent work
has extended expressiveness through pushdown reward machines \citep{pushdownrm} and through
neural and visual variants. Our question instead concerns identifiability from data.
\citet{brafman2019rdp} introduced Regular Decision Processes, and \citet{abadi2020} gave the
first method for learning and solving them. Later learning results are offline reinforcement
learning with PAC guarantees \citep{regorl,iclr2025rdp,tractableoffline}, alongside Markov
abstractions for PAC reinforcement learning in non-Markov decision processes
\citep{markovabstractions}, online learning of non-Markovian reward models
\citep{onlinenmrm}, learning task automata through hidden Markov models \citep{taskautomata},
and automata cascades \citep{cascades}. None of this work uses priors or side information.
Markov-order testing goes back to \citet{billingsley1961}, with modern order tests based on
conditional mutual information and an exact test using surrogate data
\citep{exactmarkovorder}, none of which handles actions and a behaviour policy.

\subsection{Offline learning in Regular Decision Processes}
\label{sec:rel-offline}

\citet{regorl} give the first provably efficient offline reinforcement learning framework for
Regular Decision Processes. The method learns an automaton-based abstraction from a fixed
corpus and then runs an offline algorithm over the resulting Markov model, and its PAC bound
is stated relative to a distinguishability quantity together with a concentrability condition
on the behaviour policy. \citet{tractableoffline} and \citet{iclr2025rdp} continue that line,
the latter replacing the $L_\infty^p$-distinguishability parameter with a formal-language
metric because the original quantity decays exponentially in the number of automaton states,
which is the reason we discuss that parameter in \S\ref{sec:intro}.

RegORL and this paper divide the labour between them, so a head-to-head comparison would not
be meaningful. RegORL returns a policy together with a suboptimality bound that is conditional
on the corpus being distinguishing. It does not decide whether that condition holds, and the
corpus alone cannot decide it, because the condition concerns models that the corpus fails to
separate, whereas \textsc{Pec} decides it from the candidate models (\S\ref{sec:regorl}).
The two methods therefore
return objects that are not commensurable as estimators, and they are related through the
parameter itself. Their $\mu_0$ is a property of the true RDP and the policy together,
obtained by comparing the suffix distributions of two states the policy reaches, whereas
\textsc{Pec} compares two candidate models. 

Appendix~\ref{sec:app-regorl} computes $\mu_0$ on all four environments. On three of them it
is exactly zero, so Assumption~2 fails and the bound is vacuous, and on each of them
Table~\ref{tab:ident} also reports a rational zero for the separating mass. Two of these three
environments are taken from \citet{abadi2020}. On Rotating Maze, and on the whole corridor
family, the assumption is instead satisfied trivially, since the policy reaches a single
state per layer and there is no pair of distinct states to compare. The parameter does not
flag those instances, while \textsc{Pec} certifies $\pi$-equivalence on every one of them, so
does not vanish on every instance our criterion flags.

\subsection{Bayesian inference over automata and reward machines}
\label{sec:rel-bayesauto}

Bayesian inference over automata exists, but it stands apart from decision processes.
Probabilistic Deterministic Infinite Automata \citep{pfau2010pdia} give a nonparametric
Bayesian prior over probabilistic deterministic finite automata, with Markov chain Monte Carlo (MCMC) over an infinite
mixture that averages over state cardinality. The uninformative baseline $\rho_0$ of
Appendix~\ref{sec:supp-sentence} is simpler, a uniform prior over the transition structures
with at most three states. Collapsed Gibbs and collapsed
variational methods for probabilistic finite automata \citep{collapsedpfa} marginalize
transition probabilities, and DeepDFA \citep{deepdfa} learns automata through neural
probabilistic relaxations. 

Bayesian treatments of non-Markovian rewards sit closer to control but still place the
history-dependence in the reward alone. Bayesian inverse reinforcement learning for
non-Markovian rewards \citep{birlnmr} infers reward machines from expert behaviour without
identifiability or contraction guarantees, and related work infers probabilistic reward
machines from non-Markovian reward processes \citep{inferprm}. Closest in spirit is PI-GLS
\citep{pigls}, which performs noise-robust reward machine induction with Bayesian handling of
uncertain labels and belief-aware shaping over distributions on reward machine states.

\subsection{Active automata learning}
\label{sec:rel-aal}

\paragraph{$L^*_{\mathrm{MDP}}$.} The nearest prior art is \citet{tappler2021lstar}, who learn
complete MDP structure, including states, from black-box interaction. They use schedulers, their term for behaviour
policies, computed to maximize the reachability of a target state, so they already design
policies to disambiguate, and they answer equivalence queries by directed testing. Their setting is a Markov MDP, whose semantics map test sequences to output
distributions under Myhill--Nerode equivalence over traces, and the object recovered is the
state structure of a process that is Markov given the state. $L^*_{\mathrm{MDP}}$ does not
require an omniscient teacher, since its equivalence queries are answered by directed
testing, and the implementations that ship with AALpy \citep{aalpy} answer them
statistically, through a random walk with a reset probability or through replaying an access
sequence to a hypothesis state and continuing at random. These are sound approximations. The
setting of $L^*_{\mathrm{MDP}}$ differs from ours in three respects. An equivalence test of any
accuracy must issue new traces against a live system, whereas our input is a fixed corpus
that cannot be extended.
Section~\ref{sec:eig} makes the comparison quantitative, \S\ref{sec:app-eqtest} scores the
two AALpy strategies directly, and \S\ref{sec:eig-scope} states the limits of the claim.

\paragraph{Advice and supervision.} \citet{advice2025} use advice in the form of a string
rewriting system, which is a deductive hard constraint derived from system properties such as
idempotence and commutativity. Their setting has no Bayesian framework, no posterior, and no
identifiability analysis, so it is the deductive counterpart to the
probabilistic approach taken here. \citet{bhattamishra2026} prove that the class of automata
with at most $n$ states is identifiable from positive examples together with
next-symbol-prediction labels, and they give the $L^*_{\mathrm{nsp}}$ algorithm. Their notion
of identifiability concerns labels on sequences, with no actions, no behaviour policy, and no
dynamics, whereas ours is relative to a policy.

\paragraph{Research agenda.} \citet{aalagenda2026} give a research agenda for active
automata learning arising from Dagstuhl Seminar 23492. Its framing assumes a minimally
adequate teacher throughout, and it does not treat Bayesian methods, identifiability,
non-Markovian dynamics, side information, or prior knowledge. Its \S6.1 names the gap in our
nearest prior art. The probabilities in the MDPs of \citet{tappler2021lstar} come from maximum
likelihood estimates, that is, fractions of observation counts, and do not account for
statistical error through confidence intervals. The agenda proposes to close that gap in the
frequentist way, investigating in its Challenge 27 whether approximate automata learning techniques
with PAC guarantees are possible for quantitative systems.

\subsection{Non-identifiability, misspecification, and experimental design}
\label{sec:rel-nonident}

The freezing mechanism is known in off-policy evaluation and in causal identification. In
off-policy evaluation, \citet{ope2022causal} show that under partial observability the
estimate is unavailable even with infinite observational data from the behaviour policy, and
\citet{worldmodelnonident} and \citet{ivconfounded} study related non-identifiability in world
models and in confounded Markov decision processes. In causal identification under Markov
equivalence, observational equivalence is close to definitional. Our setting is not subsumed
by these, because in a Markov MDP non-identifiability under a behaviour policy is a coverage
condition on the state--action pairs of the true model, whereas in a Regular Decision Process
it is a coverage condition on the product of the two candidates, and \S\ref{sec:collapse}
shows that covering every state of the true model does not suffice.

The supporting theory is classical. \citet{berger1986eps} introduced
$\varepsilon$-contaminated priors, which mix an elicited prior with weight $1-\varepsilon$ and
a contaminating distribution with weight $\varepsilon$, and \citet{kleijn2006misspec} give
rates under misspecification through a martingale identity. \citet{shalizi2009}
matters most here, because reinforcement learning trajectories are dependent data, and it
gives sufficient conditions for posterior convergence when every hypothesis is wrong and the
data are dependent. \citet{grunwald} shows that Bayesian inference can be inconsistent under
misspecification. Work on $\varepsilon$-contamination addresses continuous parameter spaces,
and we found no treatment of contraction over discrete structured spaces, so we use it as
machinery in Appendix~\ref{sec:supp-sentence}.

Posterior sampling, Bayes-adaptive MDPs, Bayesian Exploration Networks \citep{ben2023}, and
\citet{ross2011} maintain a posterior over models and solve the exploration--exploitation
trade-off in a Bayes-optimal way, always over parameters of Markov MDPs. BALD
\citep{bald} and Bayesian experimental design select queries by expected information gain and
are mature for classification and regression. We found no prior work combining Bayesian priors
over automaton structure with information-gain query selection, and \S\ref{sec:eig} gives our
instantiation of that combination.

\section{Additional Theory}
\label{sec:app-theory}

\subsection{Formalization and the nonzero-likelihood hypothesis}
\label{sec:lean}

Theorem~\ref{thm:frozen} is formalised in Lean~4 against Mathlib. The development proves the
odds identity, the statement that the odds stay fixed at every sample size, the corresponding
statements for the normalized posterior, and the corollary for global equivalence. It contains
no incomplete proofs, and an axiom audit confirms that every theorem depends only on
propositional extensionality, the axiom of choice, and the soundness of quotients.

\paragraph{The nonzero-likelihood hypothesis.} Theorem~\ref{thm:frozen} requires
$L_{M'}(h) \neq 0$, and the requirement is easy to overlook, because division in
$[0,\infty]$ is total and the ratio is therefore never ill-formed. Totality
of the operation does not suffice here. Where both likelihoods vanish the odds become
$0/0 = 0$, which need not equal $w/w'$, so the identity holds only where the hypothesis does.

The hypothesis is natural on its own terms. A history that both models assign probability
zero cannot be observed under either hypothesis, so it carries no evidence and says nothing
about how the odds move. Finiteness needs no such hypothesis, since likelihoods are products
of probabilities and so are at most $1$, which we also prove.

\paragraph{Two observations about the formalisation:} 

\begin{enumerate}
\item \textbf{Mathlib supports non-Markovian dynamics directly.} One might expect the
construction of the trajectory measure to dominate the cost in Lean. The trajectory
construction in Mathlib takes, at each step $n$, a kernel $\kappa_n$ from
$\prod_{i \le n} X_i$ to $X_{n+1}$, so each step's kernel already conditions on the entire
history and $X$ may vary with the timestep. History-dependence is the native shape of the Ionescu--Tulcea interface, so a
Regular Decision Process under a fixed $\pi$ is a direct instance of it. The definitional
cost lies instead in the automaton-indexed kernel and the posterior bookkeeping.

\item \textbf{Theorem~\ref{thm:frozen} needs no measure theory.} It is finite-horizon and
combinatorial, since equal per-history likelihoods give equal posterior odds, so formalising
it over finite lists of action--observation pairs keeps the proof short. The same holds for
most of the martingale development, as \S\ref{sec:martingale} describes.
\end{enumerate}

\subsection{Conservation of posterior weight}
\label{sec:martingale}

The posterior over a fixed pair of hypotheses is a bounded martingale with respect to the
data filtration, and so converges almost surely by Doob's theorem. We separate the
combinatorial part of that statement from the part that requires a measure space, because the
two carry different costs in a proof assistant and the combinatorial part carries the
substance of the argument.

\begin{proposition}[Conservation of posterior weight]
\label{prop:conservation}
For any prior weight $w$, model $M$, history $h$ and action $a$,
\[
\sum_{o \in O} w\,L_M\bigl(h\,(a,o)\bigr) \;=\; w\,L_M(h),
\]
and the same identity holds for the evidence $w\,L_M(h) + w'\,L_{M'}(h)$. Consequently the
normalized posterior satisfies the one-step tower property, and lies in $[0,1]$.
\end{proposition}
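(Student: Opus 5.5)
The identity follows from the multiplicative structure of the likelihood in Definition~\ref{def:histlik} together with the normalization of emission rows in Definition~\ref{def:rdp}. The plan has three steps.

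First, I will unfold $L_M\bigl(h\,(a,o)\bigr)$. The extended history has length $|h|+1$. Its prefix of length $|h|$ is $h$ itself, so the last factor is evaluated at the automaton state $\hat\delta(h)$. Every earlier factor depends only on $h$. This gives
\[
L_M\bigl(h\,(a,o)\bigr) \;=\; L_M(h)\cdot e\bigl(\hat\delta(h),a\bigr)(o).
\]
In a formal development this is the step that needs care: it amounts to a lemma that the product over positions of a list appended with one element splits off the last factor, and that $\hat\delta$ of each earlier prefix is unchanged by the append. I expect this bookkeeping about prefixes $h_{<t}$ of an extended list to be the main obstacle, though it is routine. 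Mathematically it is immediate.

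Second, I will sum over $o\in O$. The factor $w\,L_M(h)$ does not depend on $o$, so it comes out of the finite sum. The remaining sum is $\sum_o e(\hat\delta(h),a)(o)=1$, because every emission row is a distribution. This yields the first identity.

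Third, the evidence identity follows by applying the first identity to $(w,M)$ and to $(w',M')$ and adding, using linearity of finite sums. For the consequences, I will write the normalized posterior as
\[
p(h)=\frac{w\,L_M(h)}{w\,L_M(h)+w'\,L_{M'}(h)},
\]
taking the convention $p(h)=0$ when the evidence vanishes. The predictive probability of $o$ after $h$ and $a$ is $Z(h(a,o))/Z(h)$, where $Z$ denotes the evidence. For $Z(h)>0$,
\[
\sum_o \frac{Z(h(a,o))}{Z(h)}\,p(h(a,o)) \;=\; \frac{\sum_o w\,L_M(h(a,o))}{Z(h)} \;=\; p(h).
\]
This is the one-step tower property, and it needs both conservation identities: one for the numerator, and one to confirm the predictive weights sum to $1$. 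The terms with $Z(h(a,o))=0$ contribute zero on both sides, so the convention is harmless. Finally, $p(h)\in[0,1]$ because $0\le w\,L_M(h)\le Z(h)$, since all weights and likelihoods are nonnegative.
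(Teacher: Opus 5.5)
Your proposal is correct and takes essentially the same approach as the paper: the one-step decomposition $L_M(h\,(a,o)) = L_M(h)\,e(\hat\delta(h),a)(o)$, normalization of the emission row to collapse the sum over $o$, linearity to get conservation of the evidence, and then the tower property and the $[0,1]$ bound. These are exactly the lemmas the paper lists for its Lean development, and your convention $p(h)=0$ when the evidence vanishes matches the total division ($0/0=0$) that the paper notes its formalisation uses.
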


Summing posterior weight over the next observation returns the previous weight, which is the
martingale property in combinatorial form. We prove it in Lean over finite histories,
along with the one-step likelihood decomposition, the lifted statement that emission rows sum
to one, conservation of the evidence, the bound placing the posterior in $[0,1]$, and the
one-step tower property. The proof uses no measure space, no Ionescu--Tulcea construction, and no filtration. The
identity concerns finite sums over $O$, and it holds because emission rows are normalized.

\paragraph{Scope of the claim.} The statement of almost-sure convergence still requires the
trajectory measure. Proposition~\ref{prop:conservation}
establishes that every hypothesis Doob's theorem requires holds, since the process is
adapted, bounded in $[0,1]$, and satisfies the tower property. The remaining step is the
construction of the measure, and it involves no further combinatorial content. We quote the
consequences from the literature. When the truth is among the hypotheses and identifiable
under $\pi$, the posterior concentrates on it, and when every hypothesis is wrong, it
concentrates on the hypotheses closest to the truth in Kullback--Leibler divergence
\citep{berk1966,shalizi2009}. Between hypotheses that are $\pi$-equivalent,
Theorem~\ref{thm:frozen} keeps the posterior odds at their prior values.

\subsection{Structural account}

In a Markov MDP the model factorizes into conditionals indexed by $(s,a)$, so covering every
such pair identifies every parameter. In a Regular Decision Process the automaton state is
unobservable, being a deterministic function of history, so identification depends on which
combinations of history pattern and action occur. A policy can cover every state while
avoiding the joint pattern that separates the hypotheses. The always-wait policy reaches the
relevant state but never takes the action that exposes the difference there.

\begin{remark}[Two actions are necessary]
\label{rem:twoactions}
A one-action variant of this construction cannot exhibit the phenomenon. With a single action
the policy makes no choice, so it cannot leave any experiment missing. Taking one action with an emission that is
deterministic after the cue does not repair this, since the two models then disagree on
consistent histories of length at least $2$, with likelihoods $1/2$ and $1/4$, so no policy
renders them equivalent.
\end{remark}

\subsection{Proof of Theorem~\ref{thm:measurezero}}
\label{sec:app-proof}

We complete the argument sketched in \S\ref{sec:measurezero}, retaining the notation of that
section. Recall that the first entries partition $L_{B_\pi}$ into blocks
$\{\,h\,(a,o') : o' \in O\,\}$, each of mass $\Pi(h)\,\pi(h)(a)\,L_M(h)$ under either model.

Suppose first that $\Prob^\pi(L_{B_\pi}) > 0$. Some block then has positive mass, so
$\Pi(h)\,\pi(h)(a) > 0$ and $L_M(h) = L_{M'}(h) > 0$. Writing $(q,q') = \hat\Delta(h)$ and
taking $o$ to be an observation witnessing $((q,q'),a) \in B_\pi$,
\[
L_M(h\,(a,o)) = L_M(h)\,e(q,a)(o) \neq L_{M'}(h)\,e'(q',a)(o) = L_{M'}(h\,(a,o))
\]
on a history the policy generates with positive probability, so $M \not\eqpi M'$.

Suppose conversely that $M \not\eqpi M'$, and let $h\,(a,o)$ be a shortest history of positive
policy probability on which the two likelihoods differ. Minimality gives
$L_M(h) = L_{M'}(h)$, and that common value is nonzero, since otherwise both extensions would
vanish as well and the likelihoods would agree on $h\,(a,o)$. Dividing through by it yields
$e(q,a)(o) \neq e'(q',a)(o)$, so $((q,q'),a) \in B_\pi$, and $a \in A_\pi$ because $\Pi(h\,(a,o))
> 0$. The whole block over $h$ and $a$ therefore lies in $L_{B_\pi}$ and carries mass
$\Pi(h)\,\pi(h)(a)\,L_M(h) > 0$, whence $\Prob^\pi(L_{B_\pi}) > 0$. The infinite-horizon
statement follows from Proposition~\ref{prop:infinite}. \qed

\subsection{Scope conditions for the criterion}
\label{sec:app-scope}

\paragraph{Rewards as observations.} Definition~\ref{def:rdp} carries no separate reward
function, while \S\ref{sec:intro} motivates the setting by offline reinforcement learning
corpora, so we state the correspondence. An RDP whose reward is a history-dependent random
variable taking finitely many values is the case $O = O_{\mathrm{env}} \times R$ for a finite
reward set $R$, with $e(q,a)$ the joint emission over the observed pair, and a reward that is
a deterministic function of $(q,a,o)$ requires no product at all. Two of our environments are
of the first form, since Rotating MAB and Cheat MAB observe the reward and nothing else. Everything in \S\ref{sec:measurezero} concerns $e$ alone and so applies unchanged once
the reward is read as a component of the observation. Folding it in makes $\eqpi$ the finer
relation, since two models that agree on the environment observation while differing in
reward contribute a disagreeing pair to the criterion and are separable exactly when the
behaviour policy exercises that pair. The finiteness of $R$ is the same finite-alphabet
assumption we make of $O$ throughout, so continuous rewards fall outside the scope of these
results.

\paragraph{Restriction of the quantifier to $A_\pi$.} Dropping the restriction to the actions
the policy can take would place Corollary~\ref{cor:decide} in contradiction with
Proposition~\ref{prop:collapse}. Under the always-$\mathrm{wait}$ policy the T-maze reaches
one of the state pairs $(q_{\mathrm{L}}, q_{\mathrm{b}})$ and $(q_{\mathrm{R}}, q_{\mathrm{b}})$
with probability one, and at both the two emission rows differ under $\mathrm{run}$.
Collecting disagreeing pairs over all of $A$ and then testing membership by reaching such a
pair would therefore declare the two models distinguishable, whereas
Proposition~\ref{prop:collapse}, proved in Lean, establishes that they are $\pi$-equivalent,
so the posterior never moves. Since
$\mathrm{run} \notin A_\pi$ the pairs do not enter $B_\pi$, the procedure reports
equivalence, and $\Prob^\pi(L_{B_\pi}) = 0$ as Theorem~\ref{thm:measurezero} requires.

\paragraph{The trailing action.} Requiring the history to take the witnessing action at the
pair, in addition to reaching it, is a sharpening independent of the restriction to $A_\pi$.
Carrying the action makes $\Prob^\pi(L_{B_\pi})$ weight every candidate blame step by
$\pi(h)(a)$, which is why Theorem~\ref{thm:measurezero} needs neither
Assumption~\ref{asm:support} nor the restriction of $B_\pi$ to $A_\pi$, since an action the
policy never takes contributes no mass on its own. Under Assumption~\ref{asm:support} the
two readings coincide, because if $a \in A_\pi$ and the product automaton reaches $(q,q')$
after a history of positive probability then the policy takes $a$ there with positive
probability, so reaching the pair and exercising it there have the same null sets. That
coincidence permits Algorithm~\ref{alg:decide} to test the
pair alone. A policy whose support varies with history separates the two readings, and the
reading with the trailing action is the one that remains correct.

\begin{remark}[The two hypotheses the theorem requires]
\label{rem:hyp}
Theorem~\ref{thm:measurezero} requires normalized emission rows and the ordering in which the
action precedes the observation. Normalization collapses the sum over $o'$ within a
first-entry block to the mass of its prefix, and the action-before-observation ordering keeps
the policy factor $\Pi(h)\,\pi(h)(a)$ outside that sum, so that no policy can shift mass
between the two models at the blame step. Both belong to Definition~\ref{def:rdp} and to the
factorization in \S\ref{sec:defs}. A formalism with simultaneous action and observation
timing therefore requires a check of this correspondence, since dropping either hypothesis
breaks the argument.
\end{remark}

\paragraph{Ambiguity of the measure.} The phrase ``measure zero under $\Prob^\pi$'' would be
ambiguous over an arbitrary set of histories, since a history that $M$ calls impossible may
be one that $M'$ allows, and such a history has $M$-measure zero while witnessing that the
models are not $\pi$-equivalent. Over $L_{B_\pi}$ the ambiguity does not arise, because every
first-entry block carries the same mass
$\Pi(h)\,\pi(h)(a)\,L_M(h) = \Pi(h)\,\pi(h)(a)\,L_{M'}(h)$ under both models, so
$\Prob^\pi_M(L_{B_\pi}) = \Prob^\pi_{M'}(L_{B_\pi})$ and the subscript may be dropped. The
disagreement set of \S\ref{sec:app-conjecture} has neither property, and the
compensating-history argument that removes the ambiguity there is given in that section.

\begin{remark}[A caveat on the policy]
\label{rem:policycaveat}
Definition~\ref{def:support} quantifies $A_\pi$ over every history the policy can generate,
whereas the implementation probes the support at the empty history alone. The two agree
exactly when the action support does not vary with history, which holds for every policy in
our suite, and we verify it over all histories up to a fixed length. A policy whose support
varies requires folding the support into the product automaton, which
Appendix~\ref{sec:supp-algo} describes together with the regularity condition it needs and its
cost.
\end{remark}

\subsection{The conjecture over the disagreement set}
\label{sec:app-conjecture}

The natural conjecture concerns the \emph{disagreement set}
\[
D(M,M') \;=\; \{\, h \in \Hist \;:\; L_M(h) \neq L_{M'}(h) \,\},
\]
and it asserts that $M \eqpi M'$ iff $\Prob^\pi(D) = 0$. Treating $D$ as tractable would
require it to be regular, and a product construction on the two automata appears to show
that it is.

The product construction does not establish this, because $L_M(h)$ is a product of real
numbers. Two histories that drive the product automaton into unrelated state pairs can still
have equal likelihoods by arithmetic coincidence, as when a factor of
$\tfrac{1}{2}\cdot\tfrac{2}{3}$ on one path matches $\tfrac{1}{3}\cdot 1$ on another.
Membership in $D$ therefore depends on the numerical value of a product, which no
finite-state property of $h$ determines, so in general no finite automaton recognizes it. The
product construction proves the regularity of a different set from the one $D$ names. The
coinciding parameter values are not pathological, since nothing distinguishes them from any
others.

\paragraph{Readings of the conjecture.} The biconditional the conjecture asserts is true. The
phrase $\Prob^\pi(D) = 0$ admits three readings, namely (A) $\Prob^\pi_M(D) = 0$,
(B) $\Prob^\pi_M(D) = 0$ and $\Prob^\pi_{M'}(D) = 0$ together, and (C) $\pi$-equivalence
itself. At finite horizon all three coincide, so the conjecture holds under each. That (B)
implies (C) is immediate. That the weaker reading (A) implies it as well is the content of the
compensating-history argument, which runs as follows. Let $h\,(a,o)$ be a shortest history of
positive policy probability on which the likelihoods differ, so that
$L_M(h) = L_{M'}(h) \neq 0$ and the two emission rows at $\hat\Delta(h) = (q,q')$ differ under
$a$. Those rows are normalized, so some $o'$ has $e(q,a)(o') > e'(q',a)(o') \geq 0$ and in
particular $e(q,a)(o') > 0$. The compensating history $h\,(a,o')$ then lies in $D$ and carries
positive $M$-mass, and because the policy picks $a$ before seeing that step's observation the
factor $\Pi(h)\,\pi(h)(a)$ is common to both models, so no policy can suppress it.

We checked this on 4500 model pairs, 3000 random and 1500 constructed to be
$\pi$-equivalent, and the three readings agreed on every pair. We also checked the
compensating-history argument itself, confirming that the intended witness lies in $D$ with
zero mass while the compensating history lies in $D$ with positive mass, as the argument
requires.

The conjecture is therefore true but of no use for deciding equivalence. Because $D$ is not
regular, the conjecture concerns an object with no finite-state description, so it certifies
nothing about a given pair, yields no decision procedure, and cannot be carried to infinite
horizon through the regularity of $D$. The null-set arguments that such an extension appears
to require arise from the choice of object. Section~\ref{sec:measurezero} therefore changes the
object, asking the same question of $L_{B_\pi}$, which is regular by construction, and
Proposition~\ref{prop:infinite} then settles infinite horizon by countable additivity alone.

\subsection{Extension to infinite horizon}

The infinite-horizon case requires no separate treatment of null sets under the trajectory
measure.

\begin{proposition}[Infinite horizon]
\label{prop:infinite}
Write $E_{B_\pi}$ for the event that some prefix of an infinite trajectory lies in
$L_{B_\pi}$. Then $E_{B_\pi} = \bigcup_{h \in L_{B_\pi}^{\min}} \mathrm{Cyl}(h)$, where
$L_{B_\pi}^{\min}$ is the countable set of first entries of $L_{B_\pi}$. Hence
$\Prob^\pi(E_{B_\pi}) = 0$ iff $\Prob^\pi(\mathrm{Cyl}(h)) = 0$ for every such $h$, by
countable additivity alone.
\end{proposition}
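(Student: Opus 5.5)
The plan is to prove the set identity first and then read off the measure statement from countable additivity. Infinite trajectories live in $(A\times O)^{\mathbb{N}}$ with the product $\sigma$-algebra, and for a finite history $h$ the cylinder $\mathrm{Cyl}(h)$ is the set of trajectories whose first $|h|$ steps equal $h$. A trajectory $\omega$ lies in $E_{B_\pi}$ exactly when some finite prefix $\omega_{\le n}$ lies in $L_{B_\pi}$.

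\textbf{Step 1: the set identity.} For inclusion $\supseteq$, any $\omega\in\mathrm{Cyl}(h)$ with $h\in L^{\min}_{B_\pi}\subseteq L_{B_\pi}$ has $h$ as a prefix, so $\omega\in E_{B_\pi}$. For inclusion $\subseteq$, take $\omega\in E_{B_\pi}$ and let $n$ be the least index with $\omega_{\le n}\in L_{B_\pi}$; this index exists by well-ordering of $\mathbb{N}$. Then $h=\omega_{\le n}$ has no proper prefix in $L_{B_\pi}$, so $h$ is a first entry and $\omega\in\mathrm{Cyl}(h)$.

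\textbf{Step 2: countability and disjointness.} $L^{\min}_{B_\pi}\subseteq \Hist=(A\times O)^*$, which is countable because $A$ and $O$ are finite. The cylinders over distinct first entries are pairwise disjoint. To see this, suppose $h\neq h''$ are first entries and $\omega$ lies in both cylinders. Then one of $h,h''$ is a proper prefix of the other, which contradicts minimality of the longer one. Hence $E_{B_\pi}$ is a countable disjoint union of measurable cylinders, so it is measurable and
\[
\Prob^\pi(E_{B_\pi})=\sum_{h\in L^{\min}_{B_\pi}}\Prob^\pi(\mathrm{Cyl}(h)).
\]
A countable sum of nonnegative terms vanishes iff every term does, which gives the stated equivalence. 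Disjointness is not even needed for this last equivalence: subadditivity gives "each cylinder null $\Rightarrow$ union null", and monotonicity gives the converse. I would still record disjointness, because it yields the exact sum formula used below.

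\textbf{Step 3: connecting to Theorem~\ref{thm:measurezero}.} The trajectory measure from Ionescu--Tulcea assigns $\Prob^\pi(\mathrm{Cyl}(h))=\Pi(h)L_M(h)$, and the finite-dimensional marginals are consistent by Proposition~\ref{prop:conservation}. By the block identity, this quantity is the same under $M$ and $M'$ on first entries. The infinite-horizon condition therefore says that every first-entry block has zero mass. The finite-horizon proof already showed that this holds iff $M\eqpi M'$, since that proof exhibits a single positive-mass first-entry block whenever equivalence fails.

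The main obstacle is only bookkeeping. One must confirm that the cylinder probability under the Ionescu--Tulcea measure agrees with the finite-horizon $\Prob^\pi$, and that the subscript may be dropped. I would handle this by citing the marginal property of the construction together with the block identity from Appendix~\ref{sec:app-scope}. No null-set subtleties arise, because the union is over a countable family.
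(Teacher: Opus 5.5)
Your proof is correct and takes the same route as the paper. The paper writes $E_{B_\pi}$ as the union of cylinders over the countable set of first entries and concludes by countable additivity under the Ionescu--Tulcea trajectory measure; your least-index, prefix-minimality and countability arguments fill in exactly what the paper leaves implicit.

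One small slip in Step~3: a single first-entry cylinder $\mathrm{Cyl}(h\,(a,o))$ has mass $\Pi(h)\,\pi(h)(a)\,L_M(h)\,e(q,a)(o)$, which in general differs between $M$ and $M'$ because the rows at the blamed pair differ. So the model subscript can be dropped only after you group the cylinders into blocks $\{h\,(a,o') : o' \in O\}$, whose common mass is $\Pi(h)\,\pi(h)(a)\,L_M(h)$; your conclusion about blocks is unaffected.
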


Beyond the existence of the trajectory measure, which the Ionescu--Tulcea construction
provides, the argument needs neither a martingale argument nor a regularity condition on the
disagreement set. The restatement in Definition~\ref{def:blame} makes this possible, since it
replaces a set defined by arithmetic with one defined by reachability.

\section{Additional Experimental Results}
\label{sec:app-exp}

\subsection{The two inference paths}
\label{sec:app-ml}

This paper uses two likelihoods, which serve different results.

\paragraph{Fixed emissions, exact.} The identifiability and design results take a \emph{pair}
of fully specified candidates whose emission tables are given rationals. The likelihood is
$L_M(h)$ of Definition~\ref{def:histlik}, the posterior over the pair is the normalized
product of prior and likelihood, and nothing is integrated out. This path produces the
separating masses of Definition~\ref{def:sepmass}, the expected information gains of
\S\ref{sec:eig}, and the log Bayes factors of Table~\ref{tab:pipeline} and
Table~\ref{tab:depth}.

\paragraph{Integrated emissions, Dirichlet.} The structure-level results take the emissions
to be unknown and the transition function to be the object of inference. Placing an
independent symmetric $\mathrm{Dirichlet}(\alpha)$ prior on each emission row and integrating
gives, for a corpus $x$ with counts $n_{q,a}(o)$ of observation $o$ at the pair $(q,a)$ and
$N_{q,a} = \sum_{o} n_{q,a}(o)$,
\[
P(x \mid \delta) \;=\; \prod_{(q,a)}
\frac{(k\alpha - 1)!}{\bigl((\alpha - 1)!\bigr)^{k}} \cdot
\frac{\prod_{o \in O} \bigl(n_{q,a}(o) + \alpha - 1\bigr)!}{\bigl(N_{q,a} + k\alpha - 1\bigr)!},
\qquad k = |O|,
\]
which for integer $\alpha$ is a ratio of factorials and so an exact rational, keeping the
structure posterior rational. We use $\alpha = 1$ throughout and sweep it over
$\{1,2,3,5,10\}$ in \S\ref{sec:ablations}. This path produces the Bayes factors against a
memoryless null in Table~\ref{tab:bf}, the Tomita panel of Figure~\ref{fig:nonmarkov}, and
the priors of Appendix~\ref{sec:supp-sentence}. Where a point estimate of an emission row is
needed, as in Table~\ref{tab:stoch}, it is the posterior mean
$\bigl(n_{q,a}(o) + \alpha\bigr) / \bigl(N_{q,a} + k\alpha\bigr)$ under the same prior.

The design objective of \S\ref{sec:eig} is stated over the first path, because the question
\textsc{Pec} decides concerns a given pair of candidates. Which structure a corpus supports is
the question of the second path, and only the results listed above use it.

\subsection{Structural depth in the corridor family}
\label{sec:app-corridor}

Table~\ref{tab:tier1c} reports the separating horizon that \textsc{Pec} measures for each
member of the family of \S\ref{sec:tier1c}, against the depth the member was built to have.

\begin{table}[!ht]
\centering\small
\caption{Corridor depth against the measured separating horizon. A horizon of $\bot$ means
that \textsc{Pec} found no reachable disagreeing pair, so no finite horizon separates the two
models and the entry is a certificate of $\pi$-equivalence.}
\label{tab:tier1c}
\begin{tabular}{lcccccccc}
\toprule
requested depth & 1 & 2 & 3 & 4 & 5 & 6 & 8 \\
\midrule
rooms & 1 & 2 & 3 & 4 & 5 & 6 & 8 \\
$\pi$-equivalent & yes & yes & yes & yes & yes & yes & yes \\
horizon under $\pi$ & $\bot$ & $\bot$ & $\bot$ & $\bot$ & $\bot$ & $\bot$ & $\bot$ \\
horizon under deviation & 1 & 2 & 3 & 4 & 5 & 6 & 8 \\
\bottomrule
\end{tabular}
\end{table}

Table~\ref{tab:lookahead} gives the lookahead sweep of \S\ref{sec:tier1c}.

\begin{table}[!ht]
\centering\small
\caption{Separating mass achieved by exhaustive design at lookahead $\ell$.}
\label{tab:lookahead}
\begin{tabular}{lcccccc}
\toprule
depth $d$ & $\ell=1$ & $\ell=2$ & $\ell=3$ & $\ell=4$ & $\ell=5$ & $\ell=6$ \\
\midrule
$1$ & $1$ & $1$ & -- & -- & -- & -- \\
$2$ & $0$ & $1$ & $1$ & -- & -- & -- \\
$3$ & $0$ & $0$ & $1$ & $1$ & -- & -- \\
$4$ & $0$ & $0$ & $0$ & $1$ & $1$ & -- \\
$5$ & $0$ & $0$ & $0$ & $0$ & $1$ & $1$ \\
\bottomrule
\end{tabular}
\end{table}

\subsection{End-to-end comparison at shallow depth}
\label{sec:app-pipeline}

Table~\ref{tab:pipeline} gives the corpus comparison of \S\ref{sec:method-negative} in full.

\begin{table}[!ht]
\centering\small
\caption{One learner at a fixed budget on three corpora, as mean $\pm$ half-range over ten
seeds. BF is the Bayes factor and LL the log-likelihood. The zeros under $\pi$ are exact
rational zeros, so their spread is exactly zero by Theorem~\ref{thm:frozen}.}
\label{tab:pipeline}
\begin{tabular}{llrrr}
\toprule
Environment & corpus & $\log_{10}$ BF & coverage & LL/step$^*$ \\
\midrule
T-maze        & under $\pi$ & $\mathbf{0.00 \pm 0.00}$ & $16.2 \pm 3.2$\% & $-0.6932 \pm 0.0014$ \\
              & random      & $300.91 \pm 9.03$ & $99.9 \pm 0.3$\% & $-0.4303 \pm 0.0449$ \\
              & designed    & $300.91 \pm 9.03$ & $99.9 \pm 0.3$\% & $-0.4303 \pm 0.0449$ \\
\addlinespace
Rotating MAB  & under $\pi$ & $\mathbf{0.00 \pm 0.00}$ & $8.4 \pm 1.7$\% & $-0.6932 \pm 0.0008$ \\
              & random      & $161.93 \pm 20.04$ & $100.0 \pm 0.0$\% & $-0.6798 \pm 0.0082$ \\
              & designed    & $148.86 \pm 17.89$ & $29.5 \pm 3.3$\% & $-0.6851 \pm 0.0070$ \\
\addlinespace
Cheat MAB     & under $\pi$ & $\mathbf{0.00 \pm 0.00}$ & $29.5 \pm 3.3$\% & $-0.6553 \pm 0.0173$ \\
              & random      & $99.48 \pm 16.46$ & $100.0 \pm 0.0$\% & $-0.6378 \pm 0.0222$ \\
              & designed    & $99.48 \pm 16.46$ & $100.0 \pm 0.0$\% & $-0.6378 \pm 0.0222$ \\
\addlinespace
Rotating Maze & under $\pi$ & $\mathbf{0.00 \pm 0.00}$ & $16.2 \pm 3.2$\% & $-0.6741 \pm 0.0104$ \\
              & random      & $87.41 \pm 13.12$ & $100.0 \pm 0.0$\% & $-0.6301 \pm 0.0285$ \\
              & designed    & $87.41 \pm 13.12$ & $100.0 \pm 0.0$\% & $-0.6301 \pm 0.0285$ \\
\bottomrule
\end{tabular}

\vspace{3pt}
\begin{minipage}{0.86\linewidth}
\footnotesize $^*$ Held-out log-likelihood per step, taken over \emph{every} held-out step
and charging $\log(1/|O|)$ for each step the model cannot score, so that all three rows of an
environment are divided by the same denominator. The per-covered-step figure is not
comparable across rows, since it conditions on whichever subset each model happens to cover
and so flatters a corpus that covers only the easy steps.
\end{minipage}
\end{table}

The random and designed rows are identical on three of the four environments because there
the deviation assigns equal probability to every action. On the T-maze and Rotating Maze it
mixes the two actions half and half, and on Cheat MAB it plays each of the three arms with
probability $1/3$. The two rows therefore describe the same policy and draw the same corpus
under a shared seed. Rotating MAB is the one environment where they differ, and there the
difference is smaller than either row's spread over seeds, so the two corpora tie. The
coverage column separates them. The designed corpus concentrates on the missing experiment
and covers $29.5$\% of held-out steps against uniform exploration, which is the
cost of design at a depth where design brings no gain.

\subsection{Evidence against memorylessness}
\label{sec:bf}

This comparison places the true automaton against the memoryless null, a single-state model that
serves as a null hypothesis. Large Bayes factors establish
that the environments are non-Markovian and that the posterior detects it, which is a
precondition for everything that follows. Table~\ref{tab:bf} reports the comparison at 40
trajectories of 8 steps, where every environment rejects memorylessness.

\begin{table}[!ht]
\centering
\caption{Log Bayes factor of the true automaton against a memoryless null.}
\label{tab:bf}
\begin{tabular}{lrlr}
\toprule
Environment & States & Non-Markovian structure & $\log_{10}$ BF \\
\midrule
T-maze & 3 & remember the initial cue & $35.06$ \\
Rotating MAB & 2 & reward probabilities rotate on every reward & $8.03$ \\
Cheat MAB & 3 & an action sequence unlocks max reward & $7.81$ \\
Rotating Maze & 6 & orientation rotates every 3 moves & $9.84$ \\
\bottomrule
\end{tabular}
\end{table}

\begin{figure}[!ht]
\centering
\begin{subfigure}[t]{0.48\textwidth}
  \centering
  \includegraphics[width=\linewidth]{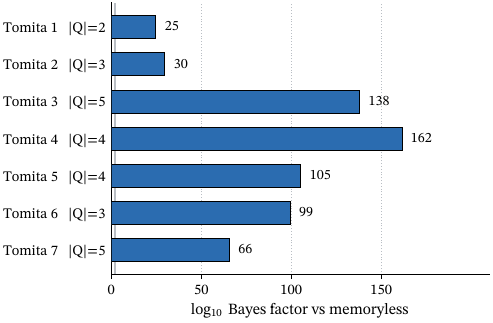}
  \caption{Evidence against memorylessness.}
  \label{fig:nm-evidence}
\end{subfigure}\hfill
\begin{subfigure}[t]{0.48\textwidth}
  \centering
  \includegraphics[width=\linewidth]{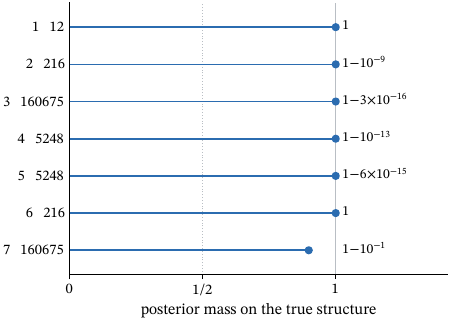}
  \caption{Posterior mass on the true structure.}
  \label{fig:nm-tomita}
\end{subfigure}
\caption{Rejection of memorylessness and structure recovery on the Tomita grammars.}
\label{fig:nonmarkov}
\end{figure}

Figure~\ref{fig:nonmarkov} reports the same comparison on the seven Tomita grammars. Every
grammar clears the threshold for decisive evidence by at least an order of magnitude, and the
structure at the posterior mode is the true one in all seven cases. The Tomita languages are
acceptors, so we encode each as an RDP with a single action $a$. The minimal DFA supplies the
transition structure over $O = \{0,1\}$, its acceptance labelling is discarded, and the state
$q_i$ of an $n$-state automaton, for $i = 0,\dots,n-1$, emits $0$ with probability
$(i+1)/(n+1)$, so no two states share an emission row. The corpus is 60 strings of 40
symbols, drawn by emitting $o \sim e(q,a)$ and moving to $\delta(q,a,o)$. For the five-state grammars (Tomita 3 and 7) we enumerate one
canonical representative per state-renaming class, $160{,}675$ structures in place of
$5^{10}$. With a single action there is no behaviour policy and no choice to decline, so this
panel tests the structure-level machinery alone and carries no policy-relative content.

\subsection{Recovery of emission probabilities}

\begin{table}[!ht]
\centering
\caption{Posterior-mean emission probabilities against the truth.}
\label{tab:stoch}
\begin{tabular}{lrrrr}
\toprule
Environment & Rows visited & Mean abs.\ err & Max abs.\ err & Min samples/row \\
\midrule
T-maze & 6/6 & $0.031$ & $0.060$ & 17 \\
Rotating MAB & 6/6 & $0.030$ & $0.067$ & 41 \\
Cheat MAB & 9/9 & $0.051$ & $0.167$ & 20 \\
Rotating Maze & 12/12 & $0.077$ & $0.365$ & 7 \\
\bottomrule
\end{tabular}
\end{table}

The maximum error tracks the sparsest row, which indicates sampling error. On Rotating Maze the mean absolute error falls from
$0.0769$ to $0.0281$, $0.0151$, and $0.0095$, and the maximum error falls from $0.365$ to
$0.120$, $0.048$, and $0.022$, as the number of trajectories goes from $40$ to $160$, $640$,
and $2560$, so the maximum error falls by a factor of about $17$.

\begin{figure}[!ht]
\centering
\begin{subfigure}[t]{0.48\textwidth}
  \centering
  \includegraphics[width=\linewidth]{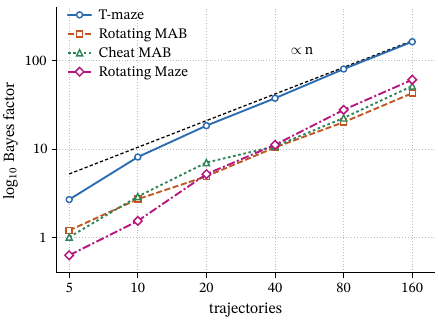}
  \caption{Evidence against trajectory count.}
  \label{fig:sc-evidence}
\end{subfigure}\hfill
\begin{subfigure}[t]{0.48\textwidth}
  \centering
  \includegraphics[width=\linewidth]{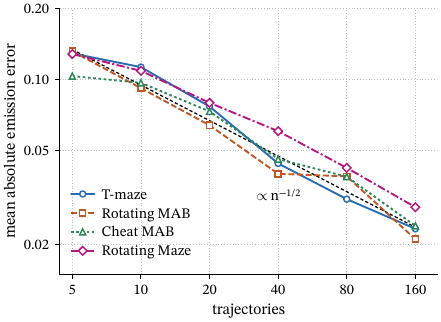}
  \caption{Emission error against trajectory count.}
  \label{fig:sc-error}
\end{subfigure}
\caption{Scaling of evidence and emission error, averaged over five seeds.}
\label{fig:scaling}
\end{figure}

Figure~\ref{fig:scaling} shows both rates on logarithmic axes. Evidence for
history-dependence runs parallel to a slope-one reference, so the log Bayes factor grows close
to linearly in the number of trajectories, rising on the T-maze from $2.70$ to $163.47$
between 5 and 160 trajectories. Emission error follows the $n^{-1/2}$ reference, which rules
out a systematic failure on the sparsest row.

\begin{remark}[A caveat on identifiability]
Structure recovery presupposes that the automaton states differ in behaviour. Two states
carrying the same emission distribution are merged under any amount of data.
\end{remark}

\subsection{Ablations}
\label{sec:ablations}

Tables~\ref{tab:bf} and~\ref{tab:stoch} report a single seed. Table~\ref{tab:seeds} gives
their variation over 12 seeds, and the minimum over seeds remains decisive in every
environment. Figure~\ref{fig:robust} plots the
quantities of this subsection.

\begin{table}[!ht]
\centering
\caption{Variation of the main quantities over 12 seeds. BF: Bayes factor; MAE: mean absolute
error; sd: standard deviation.}
\label{tab:seeds}
\begin{tabular}{lrrr}
\toprule
Environment & $\log_{10}$ BF mean $\pm$ sd & min $\log_{10}$ BF over seeds & emission MAE mean $\pm$ sd \\
\midrule
T-maze & $37.36 \pm 2.07$ & $34.82$ & $0.044 \pm 0.015$ \\
Rotating MAB & $10.78 \pm 3.09$ & $6.27$ & $0.045 \pm 0.018$ \\
Cheat MAB & $11.12 \pm 3.29$ & $6.37$ & $0.051 \pm 0.016$ \\
Rotating Maze & $10.41 \pm 1.99$ & $8.47$ & $0.065 \pm 0.014$ \\
\bottomrule
\end{tabular}
\end{table}

\paragraph{Prior strength.} Sweeping the Dirichlet concentration over
$\alpha \in \{1,2,3,5,10\}$, the log Bayes factor declines steadily and never changes sign. It
falls from $35.1$ to $24.4$ on the T-maze, from $8.0$ to $6.8$ on Rotating MAB, from $7.8$ to
$5.3$ on Cheat MAB, and from $9.8$ to $6.2$ on Rotating Maze.

\paragraph{Small-data crossover.} At 2 trajectories, 7 of the 12 seeds on the T-maze favour
the memoryless null, which is the correct preference for a simpler model, since two
trajectories do not justify three states. The T-maze is the only environment in which a
strict majority of seeds does so, with 6 of 12 on Cheat MAB. By 10 trajectories 1 of the 48
seeds still favours the null, and every mean is positive.

\paragraph{Deviation strength.} Section~\ref{sec:deviate} reports this sweep, where the
non-monotone behaviour of Cheat MAB is the informative case.

\subsection{Comparison with the reachability criterion}
\label{sec:eig-scope}

On the T-maze, reachability assigns every candidate deviation the same score. The cue is
drawn on the first step whatever the agent does, so the behaviour policy already visits every
automaton state, and the reachability score stays constant across all candidate deviations,
which leaves the criterion with no preference among them.

This blindness does not extend to every non-Markovian problem, and the four environments of
the suite split into two groups (Table~\ref{tab:reach}).

\begin{table}[!ht]
\centering\small
\caption{Form of the separating evidence in each environment.}
\label{tab:reach}
\begin{tabular}{lll}
\toprule
Environment & Separating evidence is\ldots & Reachability \\
\midrule
T-maze, Rotating MAB & an action taken in an \textbf{already-visited} state & \textbf{blind} (score constant) \\
Cheat MAB, Rotating Maze & a \textbf{state the policy never reaches} & informative \\
\bottomrule
\end{tabular}
\end{table}

When the missing experiment is to reach a state $q$, reachability is the right criterion,
which is the case $L^*_{\mathrm{MDP}}$ addresses. It gives no signal only when the policy
already visits every automaton state and the distinguishing evidence is which action was
taken there. Reachability is therefore complete for the state-reachability subclass and
uninformative outside it, and the T-maze shows that the complement of that subclass is
nonempty. Expected information gain distinguishes the deviations
from the behaviour policy on all four environments, taking the value zero under the
behaviour policy and a positive value under every deviation.

\subsection{Dependence on deviation strength}
\label{sec:deviate}

\paragraph{Definition of $p$.} The deviation strength $p$ is the probability the
deviating policy assigns to the one action the behaviour policy declines, with the remaining
mass spread uniformly over the other actions.

\begin{table}[!ht]
\centering\small
\caption{The deviation $\pi'_p$ in each environment. At $p = 0$ the deviation is the
behaviour policy, and $p$ is the probability of the named action at every step.}
\label{tab:devparam}
\begin{tabular}{llll}
\toprule
Environment & Behaviour policy $\pi$ & Action taken w.p.\ $p$ & Remaining mass \\
\midrule
T-maze        & always $\mathrm{wait}$ & $\mathrm{run}$  & $\mathrm{wait}$, w.p.\ $1-p$ \\
Rotating MAB  & always $\mathrm{arm0}$ & $\mathrm{arm1}$ & $\mathrm{arm0}$, w.p.\ $1-p$ \\
Cheat MAB     & never $\mathrm{arm2}$  & $\mathrm{arm2}$ & $\mathrm{arm0},\mathrm{arm1}$, w.p.\ $(1-p)/2$ each \\
Rotating Maze & always $\mathrm{stay}$ & $\mathrm{move}$ & $\mathrm{stay}$, w.p.\ $1-p$ \\
\bottomrule
\end{tabular}
\end{table}

The value used everywhere else in the paper is $p = 1/2$, except on Cheat MAB where it is
$p = 1/3$, so that the three arms are exercised comparably. The sweep below varies $p$ over
its whole range on Cheat MAB, which is the environment where the dependence is not monotone.

The choice of deviation matters for a second reason, independent of the comparison with
$L^*_{\mathrm{MDP}}$. Sweeping the deviation probability $p$ from $0$ to $1$ on Cheat MAB
gives the following:

\begin{table}[!ht]
\centering
\caption{Separating mass on Cheat MAB against deviation strength $p$.}
\label{tab:cheat}
\begin{tabular}{lccccccccc}
\toprule
$p$ & $0$ & $1/8$ & $1/4$ & $3/8$ & $1/2$ & $5/8$ & $3/4$ & $7/8$ & $1$ \\
\midrule
separating mass & $\mathbf{0}$ & $7/128$ & $3/32$ & $15/128$ & $\mathbf{1/8}$ & $15/128$ & $3/32$ & $7/128$ & $\mathbf{0}$ \\
\bottomrule
\end{tabular}
\end{table}

\begin{figure}[!ht]
\centering
\begin{subfigure}[t]{0.48\textwidth}
  \centering
  \includegraphics[width=\linewidth]{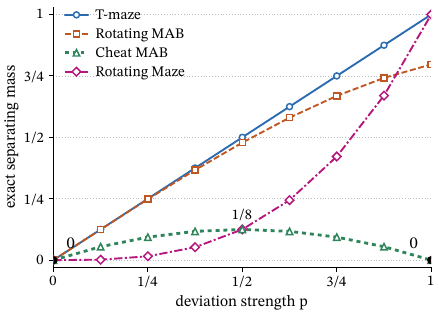}
  \caption{Separating mass.}
  \label{fig:dev-mass}
\end{subfigure}\hfill
\begin{subfigure}[t]{0.48\textwidth}
  \centering
  \includegraphics[width=\linewidth]{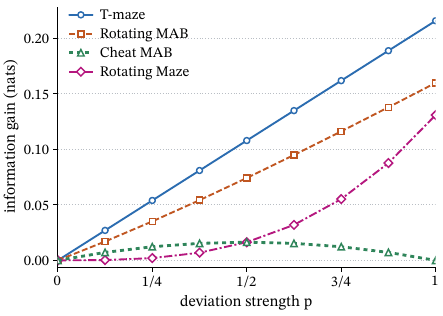}
  \caption{Expected information gain.}
  \label{fig:dev-gain}
\end{subfigure}
\caption{Deviation strength swept from the behaviour policy to its opposite.}
\label{fig:deviation}
\end{figure}

Figure~\ref{fig:deviation} shows the separating mass and the expected information gain over
the same sweep in all four environments.

The cheat sequence is $(\mathrm{arm1}, \mathrm{arm2})$. A policy that always plays
$\mathrm{arm2}$ therefore never plays $\mathrm{arm1}$, never completes the sequence, and never
unlocks the payoff. A heuristic that explores more arrives at $p=1$ and learns
nothing, because only the correct history pattern separates the models. On the other three
environments the sweep is monotone, which makes Cheat MAB the informative case.

\subsection{\textsc{Pec} under estimated emission tables}
\label{sec:app-robust}

The identifiability results of this paper are computed from exact rational emission tables,
and Definition~\ref{def:blame} accordingly tests emission rows for exact inequality. In
practice the tables are estimated from counts, and two rows estimated from the same
distribution differ with probability approaching one as the corpus grows. The exact test then
reports a disagreeing pair with probability approaching one, so \textsc{Pec} almost never
certifies $\pi$-equivalence on estimated tables. The certificate is the main output of the
procedure, so this is the respect in which the results here depend on exactness.

\paragraph{The tolerant test.} Section~\ref{sec:scale} replaces the test in
Definition~\ref{def:blame} by $\max_o |e(q,a)(o) - e'(q',a)(o)| > \tau$. With $\eta$ the
sup-norm error of the estimated tables over the rows \textsc{Pec} consults and $\gamma$ the
smallest sup-norm separation among pairs whose exact rows disagree, the tolerant test returns
the exact-arithmetic verdict on every pair whenever
\[
2\eta \;<\; \tau \;<\; \gamma - 2\eta ,
\]
a window that is non-empty as soon as $\eta < \gamma/4$ and that opens at a finite sample size.

\begin{table}[!ht]
\centering\small
\caption{\textsc{Pec} on estimated emission tables, corridor family, 600 trajectories per
candidate under a covering policy, five seeds. $\eta$ is measured over the rows the procedure
consults.}
\label{tab:robust}
\begin{tabular}{rccccc}
\toprule
depth $d$ & $\gamma$ & $\eta$ & window & $\tau = 0$ & $\tau$ in window \\
\midrule
$2$ & $1/2$ & $0.072$ & $(0.144,\,0.356)$ & never certifies & all correct \\
$4$ & $1/2$ & $0.083$ & $(0.167,\,0.333)$ & never certifies & all correct \\
$6$ & $1/2$ & $0.098$ & $(0.195,\,0.305)$ & never certifies & all correct \\
$8$ & $1/2$ & $0.095$ & $(0.190,\,0.310)$ & never certifies & all correct \\
\bottomrule
\end{tabular}
\end{table}

In Table~\ref{tab:robust}, at $\tau = 0$ the estimated rows never agree, so the exact test is
unusable on estimates. At every $\tau$ strictly inside the window, sampled at eight points per
row, the tolerant test reproduces exact arithmetic, certifying $\pi$-equivalence under the
behaviour policy and still returning the witness under the deviation. At
$\tau = \gamma + 2\eta$, above the window, it certifies the deviation as well, which is a false
certificate, so the upper bound of the window cannot be dropped.

\paragraph{Choice of estimation policy.} Estimating the tables under a uniform policy makes
$\eta$ grow with depth, reaching $0.19$ at $d = 8$ and closing the window. The cause is the
structure of the corridor. Reaching room $d-1$ requires $d-1$ waits in succession, so under
uniform exploration the deep rooms are visited a $2^{-(d-1)}$ fraction of the time and their
rows are estimated worst, and they are exactly the rows that carry the missing experiment.
Weighting the estimation policy towards traversal restores the window at every depth. The same
observation motivates designing the experiment, and here it applies to the estimation of the
tables.

\paragraph{Near-equivalent candidates.} The window closes as $\gamma$ approaches zero. For any
fixed $\gamma > 0$ it still opens at a finite sample size, although that size grows without
bound as the candidates approach equivalence. The posterior likewise still moves, more slowly
as the separation decreases. This gradual degradation is the quantitative counterpart of
Theorem~\ref{thm:frozen}. A continuous observation space enters this account the same way,
through state aliasing that shrinks $\gamma$ (Appendix~\ref{sec:limitations}).

\subsection{Equivalence testing as an exploration strategy}
\label{sec:app-eqtest}

Section~\ref{sec:method-negative} compares designed experiments against uniform
exploration. Modern active automata learning explores better than uniformly, and
Table~\ref{tab:eqtest} measures by how much. AALpy \citep{aalpy} ships two standard strategies
for answering equivalence queries, and both decompose into ingredients evaluated elsewhere in
this paper. The random-walk strategy explores by uniform random actions with a reset
probability, which is the random corpus of \S\ref{sec:method-negative}. The state-prefix
strategy replays an access sequence to a hypothesis state and then continues at random, and
its first ingredient is the state-reachability criterion of \S\ref{sec:eig-scope}. We score all
three strategies by the exact probability of producing a history on which the candidates
disagree.

\begin{table}[!ht]
\centering\small
\caption{Separating mass of each exploration strategy on the corridor family, as exact
rationals.}
\label{tab:eqtest}
\begin{tabular}{rcccr}
\toprule
depth $d$ & random walk & state prefix & designed & prefix / walk \\
\midrule
$2$ & $1/4$   & $3/8$      & $1$ & $1.5\times$ \\
$4$ & $1/16$  & $15/64$    & $1$ & $3.8\times$ \\
$6$ & $1/64$  & $21/128$   & $1$ & $10.5\times$ \\
$8$ & $1/256$ & $255/2048$ & $1$ & $\mathbf{31.9\times}$ \\
\bottomrule
\end{tabular}
\end{table}

On this family the access-sequence prefix improves on the random walk, because the missing
experiment sits behind a state the walk has to reach. The prefix converts an exponential decay
into a polynomial one, and the state-prefix strategy beats the random walk by a factor that
grows to $32$ at depth eight. The polynomial is $1/d$, since the product of depth and
separating mass is $0.750$, $0.938$, $0.984$ and $0.996$ down the column. The rate follows
from the strategy, which cycles through $d$ hypothesis states and so spends a $1/d$ fraction
of its budget on the room that carries the experiment, probing there with probability $1/2$ at
the first opportunity and with the remaining mass later. A designed deviation executes the
separating sequence every time and has separating mass $1$ at every depth, so the state-prefix
strategy still falls short of the experiment. Table~\ref{tab:eqtest} therefore
places the gap between designed experiments and a competent active-learning heuristic below
the gap against uniform exploration reported in Table~\ref{tab:depth}, without closing it.

\begin{table}[!ht]
\centering\small
\caption{The same strategies on the T-maze, where every automaton state is visited on the
first step whatever the agent does.}
\label{tab:eqtest-tmaze}
\begin{tabular}{lccc}
\toprule
& random walk & state prefix & designed \\
\midrule
separating mass & $1/2$ & $1/2$ & $1$ \\
\bottomrule
\end{tabular}
\end{table}

Table~\ref{tab:eqtest-tmaze} isolates the prefix ingredient. On the T-maze the cue is drawn
on the first step whatever the agent does, so every automaton state is already visited, no
access sequence changes what the strategy sees, and the state-prefix strategy collapses exactly
onto its random-walk tail, with the two entries equal as rationals. This is the blindness of
the reachability criterion recorded in \S\ref{sec:eig-scope}, here measured on a named
implementation. On this subclass the reachability ingredient contributes nothing, and whatever
the strategy achieves it achieves by walking at random.

\paragraph{Scope of the comparison.} These are exploration strategies scored on a fixed pair
of candidates, and the comparison does not replicate $L^*_{\mathrm{MDP}}$ or AALpy's learning
loop. Both strategies exist to answer conjectures against a live system inside an active
learning algorithm, and neither is available in the offline setting this paper addresses,
where the corpus cannot be extended. The measurement is meaningful because the ingredient
that does the work in an equivalence test, the choice of where to go before acting at random,
is the quantity our design step chooses, so the two can be compared without running either
algorithm outside its assumptions.

\subsection{Computation of RegORL's distinguishability parameter}
\label{sec:app-regorl}

Section~\ref{sec:rel-offline} argues that \textsc{Pec} and RegORL \citep{regorl} produce
objects that are not commensurable as estimators, although the parameter on which the RegORL
guarantee rests can be computed exactly on our environments.

\paragraph{The parameter.} RegORL's guarantee rests on its Assumption~2, that the behaviour
policy has $L^p_\infty$-distinguishability at least $\mu_0 > 0$. For a \emph{minimal} RDP with
layered states $Q = \bigcup_t Q_t$, $\mu_0$ is the largest value such that, for every $t$ and
every pair of distinct $q, q' \in Q_t$, the $L^p_\infty$ distance between the distributions of
the trace from step $t$ onwards, started in $q$ and in $q'$ under $\pi$, is at least $\mu_0$.
That distance is the largest difference, over trace prefixes within the remaining horizon,
between the probabilities that the two traces begin with the prefix. In the notation of
\citet{regorl}, where $Q$, $A$, $O$ and $R$ denote the numbers of states, actions,
observations and rewards, $H$ the horizon, $\delta$ the failure probability and $d^b_{\min}$
their concentrability term, their Theorem~6 requires a corpus of size
\[
N_\delta \;=\; 21\log(8QAO/\delta)\sqrt{H\log(2ARO)} \,/\, (d^b_{\min}\mu_0),
\]
so $\mu_0$ enters in the denominator at the first power and the requirement diverges as it
goes to zero.

Two features of the definition matter below. The probability of a trace prefix includes the
policy's own action probabilities, so an action $\pi$ never takes contributes zero to both
sides and separates nothing. The quantifier also ranges over states of the \emph{minimal} RDP,
minimal in the Moore sense, so two states of a layer that no input sequence separates within
the steps the episode has left count as one state. That quotient is taken with respect to the
dynamics, independently of $\pi$, and it cannot be skipped, since without it every
fixed-horizon RDP would report $\mu_0 = 0$ from its last layer, where nothing has time to
separate, and the parameter would say nothing about any policy.

\paragraph{Evaluation on the four environments.} Table~\ref{tab:ident} reports $\mu_0$ on the
four environments. On three of them $\mu_0 = 0$ as an exact rational, so Assumption~2 fails,
$N_\delta$ diverges, and the RegORL guarantee is vacuous, and on each of them the separating
mass is also an exact rational zero. Two of those three are benchmarks from
\citet{abadi2020}. The state pair that attains the minimum is $q_{\mathrm{L}}$ against
$q_{\mathrm{R}}$ at $t = 2$ on the T-maze, and the two automaton states at $t = 2$ on each MAB.

On Rotating Maze the assumption instead holds \emph{trivially}. The behaviour policy never
moves, so it reaches a single state per layer, no pair of distinct states exists to be told
apart, and the minimum is over the empty set. The same is true of every member of the corridor
family of \S\ref{sec:tier1c}, where waiting advances the corridor deterministically. Their
parameter therefore does \emph{not} vanish on every instance ours flags. Their $\mu_0$
compares two states of the \emph{true} model that the policy reaches, whereas
$\pi$-equivalence compares two \emph{candidate models}, so an ambiguity confined to a region
the policy never enters leaves Assumption~2 satisfied while \textsc{Pec} certifies
$\pi$-equivalence. On those instances the RegORL bound binds through its concentrability
condition instead. Conversely, $\mu_0$ vanishes whenever the policy cannot separate two states
of the true model, even when the candidate under test differs from the truth on an action the
policy takes, in which case \textsc{Pec} reports the pair as distinguishable. The two criteria
therefore overlap without either containing the other.

\paragraph{Deciding Assumption~2 with \textsc{Pec}.} Section~\ref{sec:regorl} shows that
$\mu_0 > 0$ is decided by running \textsc{Pec} from each same-layer pair and comparing the
separating horizon it reports against the number of steps the episode has left. We checked
this against the exact parameter on every pair of every environment and of the corridor
family, and the two agree everywhere. Table~\ref{tab:regorl-corridor} reports the cost on the
corridor family, where the ratio between the cost of the definition and that of \textsc{Pec}
grows from $3.7\times$ at depth 2 to $67.5\times$ at depth 7.

\begin{table}[!ht]
\centering\small
\caption{The corridor family: $\mu_0$ under the deviation, and the cost of deciding
Assumption~2 from the definition versus with \textsc{Pec}. Under $\pi$ the assumption holds
trivially at every depth while \textsc{Pec} certifies $\pi$-equivalence at every depth.}
\label{tab:regorl-corridor}
\begin{tabular}{rccrrr}
\toprule
depth $d$ & $\mu_0$ under $\pi'$ & ratio to $d-1$ & definition (s) & \textsc{Pec} (s) & speedup \\
\midrule
$2$ & $1/4$    & --      & $0.0005$ & $0.00014$ & $3.7\times$ \\
$3$ & $1/16$   & $0.250$ & $0.0024$ & $0.00054$ & $4.4\times$ \\
$4$ & $1/64$   & $0.250$ & $0.0100$ & $0.00143$ & $7.0\times$ \\
$5$ & $1/256$  & $0.250$ & $0.0408$ & $0.00314$ & $13.0\times$ \\
$6$ & $1/1024$ & $0.250$ & $0.1616$ & $0.00567$ & $28.5\times$ \\
$7$ & $1/4096$ & $0.250$ & $0.6536$ & $0.00969$ & $\mathbf{67.5\times}$ \\
\bottomrule
\end{tabular}
\end{table}

\paragraph{Decay of $\mu_0$ with depth.} On the corridor family, $\mu_0$ under the deviation
falls by a factor of exactly $4$ per room. Since $N_\delta \propto 1/\mu_0$, the corpus RegORL
needs grows like $4^d$ in the separating horizon even after the assumption holds. This is the
decay that motivated the formal-language metric of \citet{iclr2025rdp}, which
\S\ref{sec:intro} cites, and we exhibit it here as exact rationals on a family where depth is
the only thing varying. It is also the exponential in the separating horizon that
\S\ref{sec:method-depth} measures from the data side.

\subsection{\textsc{Pec} at scale}
\label{sec:app-scaling}

Corollary~\ref{cor:decide} states a running time linear in $|Q|\cdot|Q'|\cdot|A|\cdot|O|$
and independent of the horizon. Our environments are small because the method requires a known
pair (\S\ref{sec:experiments}), so we measure the bound on constructed pairs instead, sweeping
the product automaton from $10^2$ to $10^6$ state pairs.

\paragraph{Worst-case instances.} The bound describes the \emph{certifying} run. When
\textsc{Pec} returns a witness it stops at the first disagreeing pair and touches only the
part of the product that precedes it, whereas when it certifies $\pi$-equivalence it must
exhaust the reachable product. The sweep is therefore built from $\pi$-equivalent pairs, and
the shape of those pairs is forced. If every pair in $Q \times Q'$ is reachable and none
disagrees, then $e(q,a) = e'(q',a)$ for all $q,q'$ and every $a \in A_\pi$, so both emission
tables are constant on the actions the policy takes, since any variation would create a
disagreeing pair and hence an early return. A full product with no early exit thus forces flat
emissions under $A_\pi$, with the models differing on an action outside it. The corridor
construction of \S\ref{sec:tier1c} has the same form, and instances of this form are the worst
case for the procedure.

The instances are two cycles of coprime lengths $n$ and $m$ advanced by the policy's action.
From $(0,0)$ the product walks $(t \bmod n,\, t \bmod m)$, which by the Chinese remainder
theorem visits all $n\,m$ pairs, so the reachable product is exactly $|Q|\cdot|Q'|$.

\begin{table}[!ht]
\centering\small
\caption{\textsc{Pec} certifying $\pi$-equivalence. Every pair of the product is reachable, so
the procedure explores all of it before answering.}
\label{tab:scaling}
\begin{tabular}{rrrrrr}
\toprule
$|Q|$ & $|Q'|$ & product & pairs explored & seconds & \textmu s / pair \\
\midrule
$10$   & $11$   & $110$       & $110$       & $0.0005$ & $4.50$ \\
$50$   & $51$   & $2{,}550$   & $2{,}550$   & $0.0111$ & $4.36$ \\
$100$  & $101$  & $10{,}100$  & $10{,}100$  & $0.0438$ & $4.34$ \\
$200$  & $201$  & $40{,}200$  & $40{,}200$  & $0.1777$ & $4.42$ \\
$500$  & $501$  & $250{,}500$ & $250{,}500$ & $1.1263$ & $4.50$ \\
$700$  & $701$  & $490{,}700$ & $490{,}700$ & $2.2282$ & $4.54$ \\
$1000$ & $1001$ & $1{,}001{,}000$ & $1{,}001{,}000$ & $4.6457$ & $4.64$ \\
\bottomrule
\end{tabular}
\end{table}

The cost per pair is flat to within $8\%$ across four orders of magnitude, and the least-squares
slope of $\log(\text{time})$ against $\log(\text{pairs})$ over the seven sizes tabulated is
$1.003$, consistent with the linear growth in $|Q|\cdot|Q'|$ that the bound asserts at fixed
$|A|$ and $|O|$. Holding the product fixed at $40{,}200$ and enlarging the action support
instead gives $0.179$, $0.330$, $0.487$ and $0.604$ seconds for $|A_\pi| = 1,2,3,4$, which
isolates the $|A|$ factor.

\paragraph{Random transition tables.} Replacing the cycles with random transition tables gives
a reachable product between $38.7\%$ and $49.5\%$ of $|Q|\cdot|Q'|$, with a comparable
$4.5$--$5.2$\,\textmu s per pair and a log-log slope of $1.025$ against the pairs explored. At
$|Q| = 1000$ that is $419{,}192$ pairs in $2.17$ seconds. The linear scaling therefore comes
from the procedure and does not depend on the cycle construction.

\paragraph{Cost of finding a witness.} Fixing the product at
$1{,}001{,}000$ and moving only the depth at which the two models disagree, \textsc{Pec}
returns in $1.8\times10^{-5}$, $8.1\times10^{-5}$, $7.4\times10^{-4}$ and $7.2\times10^{-3}$
seconds at depths $0$, $10$, $100$ and $999$, against $4.65$ seconds to certify the same
product. Even at depth $999$ the search touches a thousandth of the pairs a certificate needs.
An instance whose corpus is insufficient therefore pays the full bound, while candidates that
separate early cost almost nothing.

\paragraph{Independence from the horizon.} None of these runs takes a horizon as input, and answers the infinite-horizon question. Enumerating histories answers only a finite-horizon
question, at a cost exponential in the horizon.

\section{Limitations and Open Problems}
\label{sec:limitations}

Almost-sure convergence is not formalised, since \S\ref{sec:martingale} supplies every
hypothesis Doob's theorem requires but the statement itself needs the trajectory measure. The
design procedure of \S\ref{sec:eig} is open-loop, and whether the lookahead bound of
Table~\ref{tab:lookahead} is tight for a closed-loop designer remains open. Scaling beyond the
five automaton states that exact enumeration handles requires approximate inference, which
this paper does not study. Designed experiments give no gain at short separating horizons
(\S\ref{sec:method-negative}), the reachability comparison holds only for the subclass
identified in \S\ref{sec:eig-scope}, an access-sequence heuristic narrows the gap in
separating mass between designed experiments and uniform exploration from a factor of $256$ to
a factor of about $8$ at depth eight (\S\ref{sec:app-eqtest}), and the freezing mechanism
itself is known in off-policy evaluation and causal inference. Continuous observations would
enter the account through aliasing that shrinks the separation $\gamma$ of
Appendix~\ref{sec:app-robust}, and the window of the tolerant test would close as $\gamma$
shrinks.

\end{document}